\theoremstyle{definition}
\newtheorem{definition}{Definition}
\newtheorem{theorem}{Theorem}
\def\BibTeX{{\rm B\kern-.05em{\sc i\kern-.025em b}\kern-.08em
    T\kern-.1667em\lower.7ex\hbox{E}\kern-.125emX}}
\begin{document}

\title{Spatiotemporal Observer Design for Predictive Learning of High-Dimensional Data}


\author{Tongyi~Liang\orcidlink{0000-0001-8617-2396}
	and~Han-Xiong~Li\orcidlink{0000-0002-0707-5940},~\IEEEmembership{Fellow,~IEEE}
\IEEEcompsocitemizethanks{\IEEEcompsocthanksitem The Authors are with the Department of Systems Engineering, City University of Hong Kong, Hong Kong,  SAR, China.
		E-mail: tyliang4-c@my.cityu.edu.hk;  mehxli@cityu.edu.hk.
	\IEEEcompsocthanksitem H-X~Li is the corresponding author.}
	
\thanks{Manuscript received XX XX, XXXX; revised XX XX, XXXX.}}


\IEEEtitleabstractindextext{
\begin{abstract}
	Although deep learning-based methods have shown great success in spatiotemporal predictive learning, the framework of those models is designed mainly by intuition. How to make spatiotemporal forecasting with theoretical guarantees is still a challenging issue. In this work, we tackle this problem by applying domain knowledge from the dynamical system to the framework design of deep learning models. An observer theory-guided deep learning architecture, called \textit{Spatiotemporal Observer}, is designed for predictive learning of high dimensional data. The characteristics of the proposed framework are twofold: firstly, it provides the generalization error bound and convergence guarantee for spatiotemporal prediction; secondly, dynamical regularization is introduced to enable the model to learn system dynamics better during training. Further experimental results show that this framework could capture the spatiotemporal dynamics and make accurate predictions in both one-step-ahead and multi-step-ahead forecasting scenarios.
\end{abstract}

\begin{IEEEkeywords}
	Spatiotemporal predictive learning, theory-guided deep learning, spatiotemporal observer design, video prediction.
\end{IEEEkeywords}
}

\maketitle

\IEEEdisplaynontitleabstractindextext
\IEEEpeerreviewmaketitle

\IEEEraisesectionheading{\section{Introduction}\label{sec:introduction}}
\IEEEPARstart{S}{patiotemporal} predictive learning, aiming to forecast the future based on past and current observations, is one of the critical topics in spatial-temporal data mining (STDM) \cite{atluri2018spatio}. It has always played a critical role in decision-making and planning in various practices, including climate science, neuroscience, environmental science, health care, and social media. For example, spatial-temporal traffic forecasting can guide transport planning and logistics \cite{lana2018road}. However, the complex spatiotemporal dynamics with high dimensionality increase the difficulties for predictive learning. Furthermore, the property of auto-correlation and heterogeneity makes forecasting extremely challenging \cite{jiang2020survey}. 

Extensive studies have been conducted in STDM communities to tackle this problem. Traditional machine learning methods, like k-nearest neighbors (KNN) \cite{davis1991nonparametric}, support vector machine (SVM) \cite{hong2011traffic}, gaussian process regression (GPR) \cite{sarkka2012infinite} are hard to learn the complex spatiotemporal features. Recently, deep learning-based methods have been applied to make spatiotemporal predictions and achieved remarkable performance. 2D Convolutional neural networks (CNNs) were used in DeepST \cite{zhang2016dnn} and ST-ResNet \cite{zhang2017deep} as their basic layers. Furthermore, 3D CNN was introduced in ST-3DNet \cite{guo2019deep}. Besides, recurrent neural networks (RNNs) and their variants were studied in extensive works like ConvLSTM \cite{shi2015convolutional}, MIM \cite{wang2019memory}, MotionRNN \cite{wu2021motionrnn}.

Though current deep learning-based methods show promising performance, these spatiotemporal models are intuitively designed by trial and error, needing more theoretical analysis. They need more physical explanations and theoretical guidance in model design. Theory guaranteed model is of great significance when we deal with critical issues associated with high risks (e.g., healthcare). How to design a deep learning-based model with theoretical convergence guarantees for prediction should be paid more attention \cite{wahlstrom2015pixels,wang2020deep}. Effective mechanisms could lay solid support for the reasoning of the abstract data \cite{karpatne2017theory}.

To address the issues above, we combine the observer design in control theory with deep learning and propose a theory-guided and guaranteed framework, called \textit{Spatiotemporal Observer}, for spatiotemporal predictive learning. Inspired by the Kazantzis-Kravaris-Luenberger (KKL) observer for traditional low-dimensional systems, we design a spatiotemporal observer for nonlinear systems with high dimensions. The proposed spatiotemporal observer has theoretical guarantees, including the convergence for one-step-ahead forecasting and the upper error bound of multi-step-ahead prediction. As a result, this observer provides a theory-guaranteed architecture for modeling spatiotemporal data. 

Specifically, we first extract low-dimensional representations from original data by a spatial encoder. Then, a spatiotemporal observer is introduced to estimate the future states in the latent space. Finally, the predicted future latent representations are reconstructed to observations via a spatial decoder. Because CNNs show outstanding performance with simplicity and efficiency, we instantiate the proposed framework with CNNs, including 2D convolution layers and inception modules \cite{gao2022simvp}.

The main contributions of this paper are concluded as follows.
\begin{itemize}
	\item A spatiotemporal observer is proposed for predictive learning of high-dimensional data. It can make predictions in one-step-ahead and multi-step-ahead (section \ref{sec_theory}).
	\item 
	We introduce dynamic regularization during the training process, which is beneficial to improving model performance (section \ref{sec_neural_config}). 
	The proposed framework has a theoretical guarantee of convergence and upper bounded error (section \ref{sec_generalization}).
	
	\item We instantiate the proposed spatiotemporal observer with CNNs. Extensive experiments were conducted to validate the performance and effectiveness of the proposed framework. (section \ref{sec: Experiments})
\end{itemize}
\section{Related Work}
With increasing attention drawn to this field, many deep learning-based works have been conducted and achieved significant performance in recent years \cite{wang2020deep, oprea2020review}. These spatiotemporal predictive models can be classified into two main lines: recursive and feedforward.

One line is recurrent models, which employ the RNN-based architecture for future prediction. Specifically, RNNs have been extensively applied in time series modeling \cite{lipton2015critical}. Shi et al. \cite{shi2017deep} integrated CNNs into RNNs architecture for precipitation nowcasting. The proposed convolutional LSTM became a baseline in spatiotemporal predictive learning. After that, many models were presented, for example, PredRNN \cite{wang2017predrnn}, PredRNN++ \cite{wang2018predrnn++}, TrajGRU \cite{shi2017deep}, MSPN \cite{ling2022predictive}, MotionRNN \cite{wu2021motionrnn}, and MS-RNN \cite{ma2022ms}. Recursive models have an advantage in predicting the future with flexible time length. However, recursive models suffer high computational expense and parallelization difficulty because of the chain mechanism of RNNs. To mitigate this problem, researchers proposed a CNN-RNN-CNN framework, using RNNs in the encoded latent space \cite{villegas2017decomposing}. Such methods, like E3D-LSTM \cite{wang2019eidetic}, CrevNet \cite{yu2019efficient}, and PhyDNet \cite{guen2020disentangling}, use CNNs to reduce the spatial size and capture spatial relationships and use RNN to model the temporal dynamics for future prediction.

Another line of research is feedforward models. This kind of method usually stacks CNNs as its backbone due to CNNs' extraordinary success in various tasks in computer vision \cite{voulodimos2018deep}. Oh et al. \cite{oh2015action} proposed a CNNs-based architecture for next-frame prediction in Atari games. Tran et al. \cite{tran2015learning} found that 3D CNNs outperformed 2D CNNs in spatiotemporal learning. To make a more accurate prediction, various sophisticated architectures and strategies were introduced, such as SimVP \cite{gao2022simvp}, DVF \cite{liu2017video}, and  PredCNN \cite{xu2018predcnn}. Thanks to the local connectivity and weights-sharing mechanism, CNNs-based feedforward models typically require fewer computational resources than recurse models. 

However, a common challenging issue exists for both recurse and feedforward models. They need more theoretical designs and explanations for their proposed models. This paper explores a new view of designing network architectures by exploiting observer theory.

\section{Preliminaries and problem statement}\label{sec: Problem Description}
$\mathit{Notation}$: Throughout this work, the general notations are  listed in Table \ref{tab:notation}.

\begin{table}[]
	\centering
	\caption{Table of notation}
	\label{tab:notation}
	\resizebox{!}{!}{%
		\begin{tabular}{c|l}
			\hline
			Notation & Description \\ \hline
	$\mathbb{R}^n$	&   $n$-dimensional Euclidean space    \\ \hline
	$\circ$		&   element-wise Hadamard product          \\ \hline
	$\mathcal{F}_\theta(\cdot)$		&  predictive model with parameters $\theta$         \\ \hline
	$y_k\in\mathbb{R}^{C\times H\times W}$		&   high dimensional observation at time $k$          \\ \hline
	$x_{k}\in\mathbb{R}^{c\times h\times w}$ &	latent representations of $y_k$	\\ \hline
	$z_k\in\mathbb{R}^{c'\times h'\times w'}$	&	latent state at time $k$		\\ \hline
	$\xi_{k}\in\mathbb{R}^{c^*\times h^*\times w^*}$		&  	linear dynamical state at time $k$	\\ \hline
	$f(\cdot)$		&  	transition function	\\ \hline
	$h(\cdot)$		&  	output function 	\\ \hline
	$T(\cdot)$		&  	dynamical transformation function	\\ \hline
	$T^{-1}(\cdot)$		&  	pseudo-inverse function of $T$	\\ \hline
	$\phi_{\theta_1}(\cdot)$	&	spatial encoder		\\ \hline
	$\phi_{\theta_2}^-(\cdot)$  & spatial decoder  \\ \hline
	$A' \in\mathbb{R}^{m\times m}$   & system matrix in KKL  \\ \hline
	$B' \in\mathbb{R}^{m\times p}$   & input matrix in KKL  \\ \hline
	$A\in\mathbb{R}^{c^*\times h^*\times w^*}$   & system coefficient in spatiotemporal observer  \\ \hline
	$B(\cdot)$   &  linear projection in spatiotemporal observer\\ \hline
	$\left| \cdot  \right|$ or $\|\cdot\|_2$ & Euclidean norm \\ \hline
	$\|\cdot\|_\sigma$ & spectral norm	\\ \hline
	$\|\cdot\|_F$ &	Frobenius norm \\ \hline
		 
		\end{tabular}%
	}
\end{table}

\subsection{Spatiotemporal Predictive Learning}
Suppose there is a dynamic system. We take $C$ measurements on a $H\times W$ spatial grid every time step. A tensor $y\in\mathbb{R}^{C\times H\times W}$ can describe observation each time. After that, the history observations for a certain time length $T$ can be expressed as $y_{1:T}=\{y_1, ..., y_T\}$. The future observations with time length $\tau$ is noted as $y_{T+1:T+\tau}=\{y_{T+1},..., y_{T+\tau} \}$, which have a shape of ($\tau$, C, H, W). As shown in Fig. \ref{st_problem}, spatiotemporal predictive learning aims to predict the future observations
$y_{T+1:T+\tau}$ based on the history sequence $y_{1:T}$.
\begin{equation}
	\label{problem}
	y_{T+1},..., y_{T+\tau} = \mathcal{F}_\theta(y_1,...,y_T)
\end{equation}
where $\mathcal{F}_\theta(\cdot)$ is the target predictive model with learnable parameters $\theta$.
\begin{figure}[!t]
	\centering
	\includegraphics[width=2.5in]{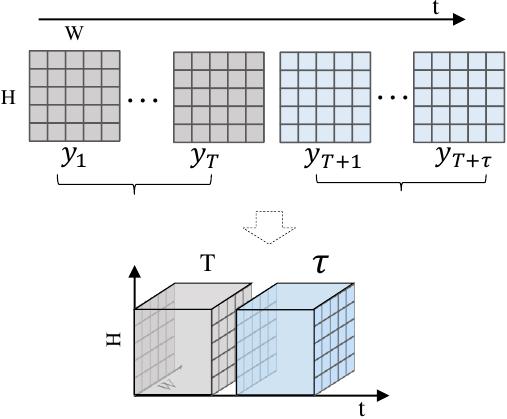}
	\caption{Spatiotemporal nature of forecasting high-dimensional data.}
	\label{st_problem}
\end{figure}
\subsection{Kazantzis-Kravaris-Luenberger Observers}
 Consider a discrete-time system, which has the following general form:
 \begin{equation}
 	\label{nonlinear_system}
 	\begin{cases}
 		z'_{k+1} = f(z'_k) \\
 		x'_k=h(z'_k)
 	\end{cases}
 \end{equation}
with state $z' \in \mathbb{R}^n$, output $x' \in \mathbb{R}^p$, transition function $f(\cdot)$ and output function $h(\cdot)$.	

Typically, we note that system \eqref{nonlinear_system} has nonlinear dynamics because $f(\cdot)$ and $h(\cdot)$ are both nonlinear functions. Function $f(\cdot)$ usually has high complexity, and it is challenging to approximate this function for prediction directly. Therefore, nonlinear reasoning about future states is difficult.

Following \cite{brivadis2019luenberger}, we can assume that there exists a transformation map $T$ that enables

\begin{equation} \label{linear dynamic}
	T(f(z))=A'T(z)+B'h(z) 
\end{equation}
where $A' \in\mathbb{R}^{m\times m}$, $\|A'\|_\sigma<1$, and $B' \in\mathbb{R}^{m\times p}$. Then, the Kazantzis-Kravaris-Luenberger (KKL) observer is such a system:
\begin{equation} \label{observer}
	\begin{cases}
		\xi'_{k} = A' \xi'_{k-1} + B'x'_{k-1}  \\
		\hat{z'}_{k} = T'^{-1}(\xi'_{k})
	\end{cases}
\end{equation}
with $\xi'=T(z') \in \mathbb{R}^{m}, m=(n+1)p$. $T^{-1}$ is a pseudo-inverse of $T$.

We call $T$ a dynamic transformation because $T$ allows us to predict the future by linear inference instead of nonlinear. Therefore, the observer \eqref{observer} provides us an alternate way to make prediction linearly as long as the following equations hold:
\begin{equation}
	\lim_{k \to +\infty} |T(z'_k)-\xi_k(z'_0)| = 0
\end{equation}
\begin{equation}
	\lim_{k \to +\infty} |z'_k - T^{-1}(\xi_k(z'_0))| = 0
\end{equation}

Though KKL has been successfully applied in autonomous and nonautonomous systems with known states \cite{peralez2021deep}, some issues are still unexplored. Firstly, the dynamic transformation $T$ usually exists, if we know the system equation, but it is hard to obtain it in analytical form as examples shown in \cite{brivadis2019luenberger, ramos2020numerical}. Secondly, it is impossible to infer analytical expressions of KKL for systems, if we do not known the system's governing functions. How to design KKL for unknown systems in high dimensions has yet to be designed.

\subsection{Problem Statement}
The problem investigated in this paper can be summarized below:
\begin{itemize}
	\item How to design spatiotemporal observers for modeling and forecasting high-dimensional data with convergence guarantees?
\end{itemize}

\section{Spatiotemporal Observer Design for Learning}
\label{sec: Methodology}
To tackle the abovementioned problem, we propose the \textit{Spatiotemporal Observer} for predictive learning (see Fig. \ref{fig: overall framework}).
\begin{figure*}[!t]
	\centering
	\includegraphics[width=7in]{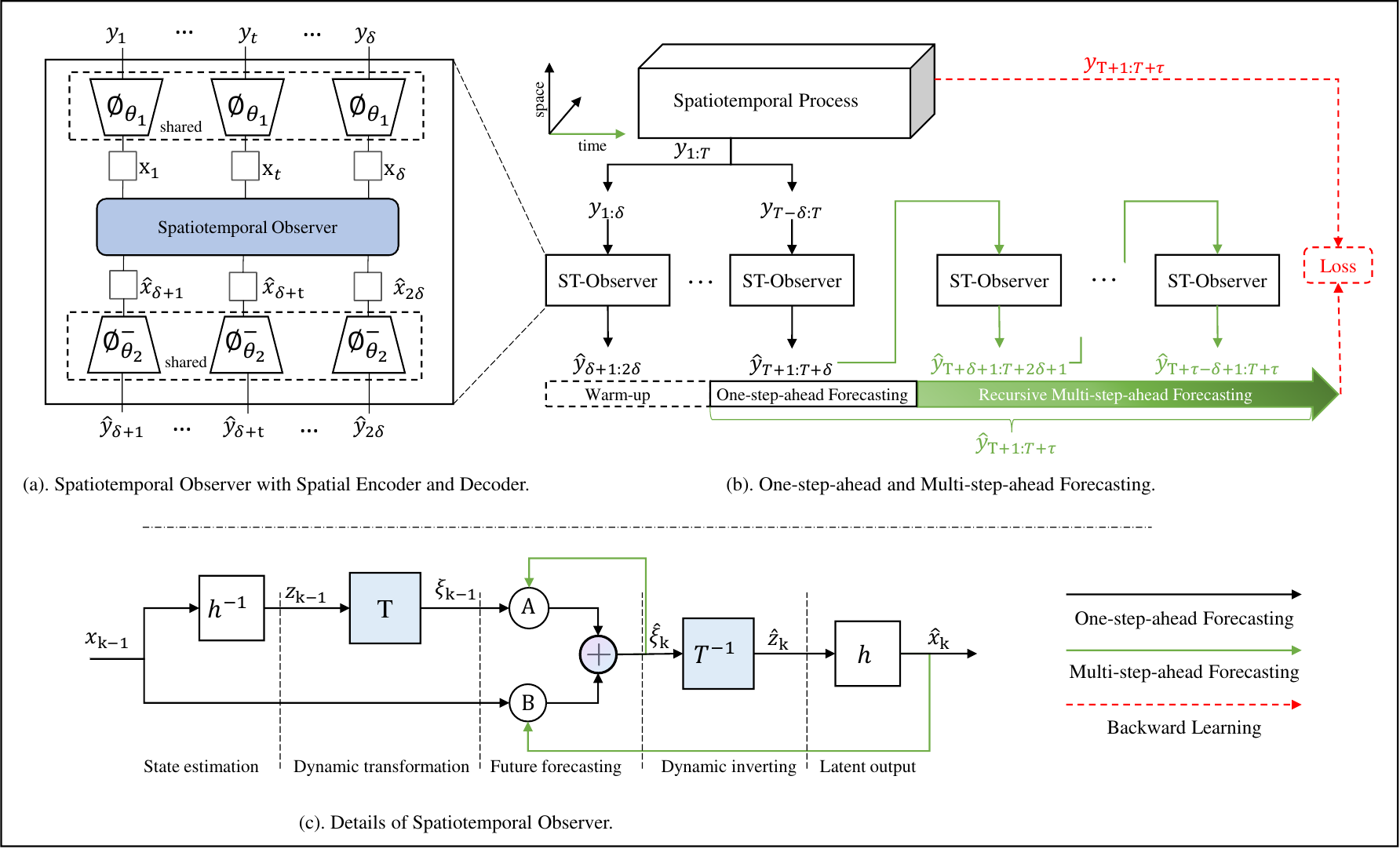}
	\caption{Conceptual framework of spatiotemporal observer for forecasting.}\label{fig: overall framework}
\end{figure*}
\subsection{Theoretical Design} \label{sec_theory}
We first introduce the definition of the Spatiotemporal Observer and then show how to use it for one-step-ahead and multi-step-ahead forecasting.
\subsubsection{Definition of Spatiotemporal Observer}
A spatiotemporal process usually has complex dynamics and high dimensions. To remove the redundant information, we first assume that there exist low-dimensional representations which can capture the dynamics of the original process in the latent space. The latent representations $x_{k}$ are obtained by a spatial encoder $\phi_{\theta_1}$, which decomposes the shared spatial features and retains the dynamics of the observations $y_k$ at time step $k$.
\begin{equation}
	\label{spatial_encoder}
	x_{k} = \phi_{\theta_1}(y_k)
\end{equation}
where $y_k \in\mathbb{R}^{C\times H\times W}$, $x_k \in\mathbb{R}^{c\times h\times w}$, $\theta_1$ is the learnable parameter.

Then, the dynamics of the latent representations can be described by a general state space model \eqref{st__system}. Therefore, the predictive learning problem is equivalent to modeling such a high-dimensional system in the latent space.
\begin{equation}\label{st__system}
	\begin{cases}
		z_{k} = f(z_{k-1}) \\
		x_k=h(z_k)
	\end{cases}
\end{equation}
with latent state $z_k \in\mathbb{R}^{c'\times h'\times w'}$, latent output $x_k \in\mathbb{R}^{c\times h\times w}$, transition function $f(\cdot)$, output function $h(\cdot)$. $c$, $h$ and $w$ represent channels, height, and width. We denote $\xi_k, z_k$ as the true value at time step $k$ of system \eqref{st__system} with $\hat{\xi}_k, \hat{z}_k$ as their estimations under initial state $z_0$.

Compared with system \eqref{nonlinear_system}, system \eqref{st__system} has a higher dimension. A naive way to apply the traditional theory is to vectorize the spatial dimensions of spatiotemporal observations. However, this simple transformation is only suitable for cases with small scales. It means that the traditional KKL observer designed for system \eqref{nonlinear_system} with large scales is not applicable anymore. We should design a new observer for this high-dimensional case instead.

The key to generalizing KKL to the spatiotemporal observer is generalizing matrix multiplication $A' \xi'_{k-1}$ into high dimension.

Given $\xi \in\mathbb{R}^{c^*\times h^*\times w^*}$, we first reshape it into a vector $\xi'=[\xi'_1,...\xi'_m]^T \in\mathbb{R}^{m}$, where $m=c^*h^*w^*$. Because $\|A'\|_\sigma<1$, $A'\in\mathbb{R}^{m\times m}$ in Eq. \eqref{observer} can be equivalently represented as a diagonal matrix $A'=diag\{a_1,..,a_m\}$ using eigenvalue decomposition. Thus, we have
\begin{equation}
	A'\xi' = diag\{a_1,..,a_m\}\times [\xi'_1,...\xi'_m]^T = [a_1\xi'_1,...,a_m\xi'_m]^T 
\end{equation}

Then, we reformulate the result of $A'\xi'$ into a tensor $I$ with shape $(c^*, h^*, w^*)$. $I$ can be further expressed as $I = A \circ \xi$, where $A\in\mathbb{R}^{c^*\times h^*\times w^*}$.

Since $\|A'\|_\sigma<1$, the inequality $\max\{\left| a_i\right|\}<1, \forall i\in\{1, 2, ..., m\}$ holds. Thus, every element in the full tensor $A$ meets $A_{ijl} \in (0, 1), \quad \forall i \in \{1,..., c^*\}, \forall j \in \{1,..., h^*\}, \forall l \in \{1,..., w^*\}$.

Following the KKL observer, we define the spatiotemporal observer as follows.

\begin{definition}[Spatiotemporal Observer]
	Assume that there exists $T: \mathbb{R}^{c'\times h'\times w'}\to \mathbb{R}^{c^*\times h^*\times w^*}$ and it has a pseudo inverse $T^{-1}$. The auxiliary discrete-time system given by
	\begin{equation}
		\label{st_observer}
		\begin{cases}
			\xi_{k} = A \circ \xi_{k-1} + B(x_{k-1})  \\
			\hat{z}_{k} = T^{-1}(\xi_{k})
		\end{cases}
	\end{equation}
	where $A\in\mathbb{R}^{c^*\times h^*\times w^*}$, its element $A_{ijl} \in (0, 1), \quad \forall i \in \{1, ..., c^*\}, \forall j \in \{1, ..., h^*\}, \forall l \in \{1,..., w^*\}$, $\xi_k \in\mathbb{R}^{c^*\times h^*\times w^*}$, and linear projection $B: \mathbb{R}^{c\times h\times w}\to \mathbb{R}^{c'\times h'\times w'}$ , is called a spatiotemporal observer for system \eqref{st__system} if and only if for any initial state $z_0$, the solutions of coupled systems \eqref{st__system} and \eqref{st_observer} satisfy
	\begin{equation}
		\label{y_convergence}
		\lim_{k \to +\infty} |z_k-\hat{z}_k| = 0
	\end{equation}
\end{definition}
\subsubsection{One-step-ahead Forecasting}
Fig. \ref{fig: overall framework} (c) shows the detailed structure of the spatiotemporal observer. This module takes the grouped latent representations $x_{k-1}$ as input and predicts the future representations $x_{k}$. Specifically, the spatiotemporal observer makes prediction using the following five steps.

\textbf{State estimation}. Firstly, according to the relationship between $x_k$ and $z_k$ as described as \eqref{st__system}, when given $x_k$ in one-step-ahead prediction, we can infer $z_k$ by using the inverse function of $h^{-1}$
\begin{equation}\label{eq_h1}
	z_{k-1} = h^{-1}(x_{k-1})
\end{equation}
	
\textbf{Dynamic transformation}. The nonlinear state $z_k$ would be transformed into linear state $\xi_k$ by dynamical transformation $T$.
\begin{equation}\label{eq_T}
	\xi_{k-1} = T(z_{k-1})
\end{equation}

\textbf{Future forecasting}. Then, we can predict the future state $\xi_{k+1}$ linearly by the state transition function as described in observer \eqref{st_observer}.
\begin{equation}\label{eq_pre}
	\hat{\xi}_{k} = A \circ \xi_{k-1} + B(x_{k-1})
\end{equation}
	
\textbf{Dynamic inverting}. After that, the nonlinear state $z_k$ is inferred from $\hat{\xi}_{k}$ by the dynamic inverting function $T^{-1}$. 	
\begin{equation}\label{eq_inv}
	\hat{z}_{k} = T^{-1}(\hat{\xi}_{k})
\end{equation}

\textbf{Latent output}. As described in \eqref{st__system}, the predicted state $\hat{z}_{k}$ is then used as input to the latent output function. And, we get the the predicted latent representation $x_k$.
\begin{equation}\label{eq_out}
	\hat{x}_k=h(\hat{z}_k)
\end{equation}

Finally, we shall complete the one-step-ahead forecasting by reconstructing the future prediction using the spatial decoder $\phi_{\theta_2}^-$ with learnable parameter $\theta_2$.
\begin{equation}\label{eq_decoder}
	\hat{y_k} = \phi_{\theta_2}^-(\hat{x}_{k})
\end{equation}

\subsubsection{Recursive Multi-step-ahead Forecasting}
The ability to predict the future with flexible length is an essential requirement for the broad application of a predictive model. For example, we want to predict the future 10 frames using 5 frames as input. There are two general strategies for multi-step-ahead forecasting \cite{petropoulos2022forecasting}. The one is the recursive strategy, which predicts the future one-step-ahead autoregressively based on former predictions. This strategy enables the predictive model to have efficient parameters and a flexible forecasting horizon.
Nevertheless, the model would be asymptotically biased \cite{terasvirta2010modelling}. The other is the direct strategy, which predicts the multi-step results using once-feedforward computation. It can avoid the accumulation of forecasting errors but lack flexibility.

We utilize a hybrid strategy to trade off the characteristics of recursive and direct methods. The original sequence is divided into several groups. For each forecasting step, the model makes a direct prediction for a short horizon, then autoregressively outputs the final long horizon. For example, a spatiotemporal sequence $\textbf{y}$ contains $T$ number of observations with a $(C, H, W)$ shape. We assign $\delta$ observations into a group. Then, we obtain a new sequence $\textbf{Y}$ with a shape of $(T/\delta,C\delta, H, W)$, as shown in Eq. \eqref{grouping}.

\begin{equation}\label{grouping}
	\textbf{y} : \{y_1, ...,\underbrace{y_{i\delta},..., y_{i\delta+\delta-1}}_{Y_i \, with \,\delta\, elements},...,y_T\}
	\Leftrightarrow 
	\textbf{Y} : \{Y_1, ..., Y_{T/\delta}\}
\end{equation}
where the right arrow denotes the grouping operation, and the left arrow is the reversal degrouping operation. Then, based on the hybrid strategy, we make multi-step ahead prediction by repeatedly applying the spatiotemporal observer \eqref{st_observer}. For instance, the latent linear state can be computed for future $d$ steps as follows.
\begin{equation}
	\begin{cases}
		\hat{\xi}_{k} = A \circ \xi_{k-1} + B(x_{k-1})\\
		\quad\quad\quad\vdots \\
		\hat{\xi}_{k+d} = A \circ \hat{\xi}_{k+d-1} + B(\hat{x}_{k+d-1})\\
	\end{cases}
\end{equation}
A summary of the proposed spatiotemporal observer for multi-step-ahead forecasting is represented in Algorithm \ref{alg_stobserver}.
\begin{figure}[!t]
	\renewcommand{\algorithmicrequire}{\textbf{Input:}}
	\renewcommand{\algorithmicensure}{\textbf{Output:}}
	\begin{algorithm}[H]
		\caption{Spatiotemporal Observer for Forecasting}
		\begin{algorithmic}[1] \label{alg_stobserver}
			\REQUIRE Observations $y_{1:T}=\{y_1, ..., y_t\} \in \mathbb{R}^{T\times C\times H\times W}$         
			\ENSURE Predictions $\hat{y}_{1:\tau} = \{\hat{y}_{1},..., \hat{y}_{\tau}\} \in \mathbb{R}^{\tau\times C\times H\times W}$  
			\STATE $ Y_{1:n} \leftarrow$ group $y_{1:t+\tau}$ into $n$ groups using Eq. \eqref{grouping}  
			\FOR{$k \gets 1$ to $n$} 
			\STATE $x_{k-1}\leftarrow y_{k-1}$ spatial encode using Eq. \eqref{spatial_encoder}.
			\STATE $z_{k-1}\leftarrow x_{k-1}$ state estimation using Eq.\eqref{eq_h1}.
			\STATE $\xi_{k-1} \leftarrow z_{k-1}$  dynamic transformation using Eq. \eqref{eq_T}.
			\STATE $\hat{\xi}_{k} \leftarrow \xi_{k-1}$ future forecasting via Eq. \eqref{eq_pre}.
			\STATE $\hat{z}_{k} \leftarrow \hat{\xi}_{k}$ dynamic inverting via Eq. \eqref{eq_inv}.
			\STATE $\hat{x}_{k} \leftarrow$ $\hat{z}_{k}$ latent output via Eq. \eqref{eq_out}
			\STATE $\hat{Y}_{k} \leftarrow \hat{x}_{k}$ spatial reconstruction using Eq. \eqref{eq_decoder}.
			\ENDFOR  
			\RETURN Predictions.
		\end{algorithmic}
	\end{algorithm}
\end{figure}

\subsection{Neural Configurations for Learning}\label{sec_neural_config}
\subsubsection{Function Approximation via CNNs} \label{function_app}
\begin{figure}[!t]
	\centering
	\includegraphics[width=3.6in]{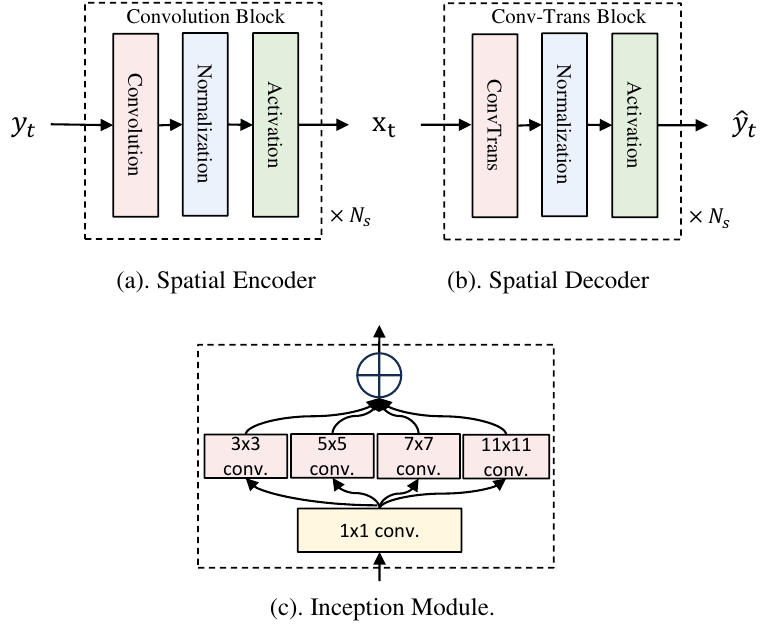}
	\caption{Details of Spatial Encoder, Spatial Decoder and Inception.}
	\label{encoder_deocoder}
\end{figure}
The spatiotemporal observer provides us with a theoretical framework for predictive learning. However, when we use it in pure data cases, the form of the unknown function in the model remains to be determined. As a universal approximator, neural networks, like CNNs, perform excellently in function approximation. In this work, we use CNNs to approximate unknown functions under the proposed framework, including $h$, $T$, and their inverse functions.

We first instantiate the spatial encoder $\phi_{\theta_1}$ in Eq. \eqref{spatial_encoder} with CNNs. Fig. \ref{encoder_deocoder} (a) depicts a schematic diagram of the spatial encoder. It consists of $N_S$ convolution blocks, each containing a convolution 2D layer ($Conv2d$), a normalization layer ($GroupNorm$), and a $LeakyReLU$ activation function ($\sigma$).
\begin{equation}\label{nn_spatioencoder}
	\phi_{\theta_1} (y_k) = [\sigma(GroupNorm(Conv2d))] ^{(N_S)} (y_k)
\end{equation}

For nonlinear functions in the spatiotemporal observer, we approximate them by employing $Inception$ modules, which have succeeded significantly in various vision tasks \cite{szegedy2016rethinking}. As shown in Fig. \ref{encoder_deocoder} (c), Each $Inception$ consists of 4 different convolution filters after a $1\times 1$ convolution. Following \cite{gao2022simvp}, we choose 3, 5, 7, and 11 as kernel sizes of convolution layers in the $Inception$ module. We thus parameterize $T$ and $T^{-1}$ with the $N_T$ $Inception$ modules, and $h$ and $h^{-1}$ with the $N_h$ $Inception$ modules. Taking $T$ as an example, we can express it as follows.
\begin{equation}\label{eq_T_nn}
	T(z_{k-1})  = [Inception] ^{(N_T)} (z_{k-1})
\end{equation}

In future forecasting Eq. \eqref{eq_pre}, we set coefficients $A$ and weights of $B$ as learnable parameters. We utilize functions like sigmoid to achieve $A_{ijl} \in (0, 1)$. $B(x)$ is a linear projection for $x$, and we implement it using a convolution operation. The future forecasting is therefore implemented as Algorithm \ref{alg:generate_a_b} in a Pytorch-like style.
\begin{figure}[!t]
	\renewcommand{\algorithmicrequire}{\textbf{Input:}}
	\renewcommand{\algorithmicensure}{\textbf{Output:}}
	\begin{algorithm}[H]
		\caption{Calculate Eq. \eqref{eq_pre} with Learnable $A$ and $B$}
		\begin{algorithmic}[1] \label{alg:generate_a_b}
			\REQUIRE $\xi_{k-1}$, $x_{k-1}$          
			\ENSURE predicted $\xi_{k}$  
			\STATE {$A$ = torch.nn.Parameter(torch.empty($T_c$, n))}
			\STATE {$B$ = torch.nn.Conv2d($T_S$, $T_c$)} \qquad \qquad \# or Inception
			\STATE {$A$ = torch.nn.functional.sigmoid($A$)} 
			\STATE {$\xi_k$ = $A\circ\xi_{k-1}$ + $B(x_{k-1})$}
			\STATE {Return $\xi_k$}
		\end{algorithmic}
	\end{algorithm}
\end{figure}

The spatial decoder $\phi_{\theta_2}$, as shown in \ref{encoder_deocoder} (b), utilizes the convolution transpose blocks as its backbone. This module consists of a 2D transposed convolution layer ($Conv2dTrans$), a layer normalization layer ($GroupNorm$), and a $LeakyReLU$ activation function ($\sigma$). Like the spatial encoder, the spatial decoder stacks $N_S$ deconvolution modules.
\begin{equation}\label{nn_spatiodecoder}
	\phi_{\theta_2}^-(\hat{x}_{k})= [\sigma(GroupNorm(Conv2dTrans))] ^{(N_S)}(\hat{x}_{k})
\end{equation}

It should be noted that there is an inverse relationship between $h$ and $h^{-1}$, $T$ and $T^{-1}$, and spatial encoder and spatial decoder. We use skip connections between them for better information transformation and reconstruction.
 
\subsubsection{Learning with Dynamical Regularization}
To learn the parameters of the proposed model, we design the overall objective function with dynamical regularization. $\mathcal{L}_y$ has two terms $L_2$ and $L_1$ loss of predicted frames, which enables the model to learn the smoothness and sharpness of frames. 
\begin{equation} 
	\label{loss_y}
	\mathcal{L}_y  = \frac{1}{NL}\sum_{i=1}^{N}\sum_{j=1}^{L}\left\| y_{ij} - \hat{y}_{ij}\right\|_2^2 + \lambda_0\left\| y_{ij} - \hat{y}_{ij}\right\|_1 
\end{equation}
where $N$ is the batch size, and $L$ is length of the future sequence. 

We also force the predicted latent representation $\hat{x}$ to be close to the latent representation $x$ of the ground true of future observation.
\begin{equation} 
	\label{loss_x}
	\mathcal{L}_x  =  \frac{1}{NL}\sum_{i=1}^{N}\sum_{j=1}^{L}\left\| x_{ij} - \hat{x}_{ij}\right\|_2^2	
\end{equation}
Additionally, as described in the spatiotemporal observer, we shall know that the estimated state $z$ and transformed $\xi$ from future observations can serve as the true value of the prediction. So, we can set $\mathcal{L}_z$ and $\mathcal{L}_{\xi}$ as dynamical regularization by minimizing the $L_2$ distance between the predicted value and the ground truth.
\begin{equation} 
	\label{loss_z}
	\mathcal{L}_z  =  \frac{1}{NL}\sum_{i=1}^{N}\sum_{j=1}^{L}\left\| z_{ij} - \hat{z}_{ij}\right\|_2^2	
\end{equation}

\begin{equation} 
	\label{loss_xi}
	\mathcal{L}_{\xi}  =  \frac{1}{NL}\sum_{i=1}^{N}\sum_{j=1}^{L}\left\| \xi_{ij} - \hat{\xi}_{ij}\right\|_2^2	
\end{equation}
Combining everything, we define the overall loss function as follows.
\begin{equation} 
	\label{loss_all}
	\mathcal{L}  = \mathcal{L}_y + \lambda_1\mathcal{L}_x + \lambda_2\mathcal{L}_z + \lambda_3\mathcal{L}_{\xi}
\end{equation}

\subsection{Generalization and Convergence Analysis} \label{sec_generalization}
In this section, we give the theoretical analysis of generalization error and convergence of the proposed spatiotemporal observer.
\subsubsection{Generalization Error Bound}
Suppose the training examples $S=((x_1, y_1), ...,(x_n, y_n))$ drawn i.i.d. according to an unknown distribution $\mathcal{D}$. Let $X=(x_1,...,x_n)$ be the input and $Y=(y_1,...,y_n)$ be the output. Suppose the hypothesis space computed by the spatiotemporal observer is $\mathcal{H}$. We denote the loss function  $\mathscr{L}_\eta$ to measure the prediction error. Assume the loss function  $\mathscr{L}_\eta$ is $\eta$-Lipschitz and is upper bounded by $M>0$. The forecasting problem is using the training samples to find a hypothesis $h\in\mathcal{H}$ with the expected risk or generalization error defined as
\begin{equation}
	\mathcal{R}_D(h)=\underset{(x,y)\sim D}{\mathbb{E}}\left[ \mathscr{L}_\eta(h(x)),y \right]
\end{equation}
The empirical risk is denoted as
\begin{equation}
	\hat{\mathcal{R}}_{S|\mathscr{L}_\eta}(h)=\frac{1}{n}\sum_{i=1}^{n}\left[ \mathscr{L}_\eta(h(x_i)),y_i \right]
\end{equation}

Based on the covering number analysis of the spatiotemporal observer, we obtain the following generalization bound.
\begin{theorem}\label{thm:n_prediction}
	Given $n$ training samples $S=((x_1, y_1), ...,(x_n, y_n))$  drawn i.i.d. according to distribution $\mathcal{D}$. A hypothesis $h\in\mathcal{H}$ is defined by the spatiotemporal observer. Let $X=(x_1,...,x_n)$ be the input and loss function  $\mathscr{L}_\eta$ be $\eta$-Lipschitz and upper bounded by $M>0$. Then with probability at least $1-\delta$, each hypothesis $h$ satisfies
	\begin{equation}\label{eq_genebound}
		\mathcal{R}_D(h) \le \hat{\mathcal{R}}_{S|\mathscr{L}_\eta}(h) + 32n^{-5/8}\left( \frac{\left\| X \right\|_F\mathscr{R}}{\eta} \right)^{1/4}
		+ M\sqrt{\frac{\log\frac{1}{\delta}}{2n}} 
	\end{equation}
	where $\mathscr{R}$ is defined in Eq \eqref{eq_rademancher}.
\end{theorem}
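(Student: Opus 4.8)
The plan is to chain three standard ingredients: a concentration/symmetrization argument that reduces the generalization gap to an empirical Rademacher complexity, a Lipschitz reduction from the loss class $\mathscr{L}_\eta\circ\mathcal{H}$ to the hypothesis class $\mathcal{H}$, and a covering-number (Dudley entropy integral) bound on $\mathfrak{R}_n(\mathcal{H})$ that exploits the compositional, contractive structure of the spatiotemporal observer. First I would note that since $\mathscr{L}_\eta$ is bounded by $M$, replacing one sample changes $\sup_{h\in\mathcal{H}}(\mathcal{R}_D(h)-\hat{\mathcal{R}}_{S|\mathscr{L}_\eta}(h))$ by at most $M/n$, so McDiarmid's inequality gives, with probability at least $1-\delta$,
$$\mathcal{R}_D(h)\le \hat{\mathcal{R}}_{S|\mathscr{L}_\eta}(h)+\mathbb{E}_S\Big[\sup_{h\in\mathcal{H}}\big(\mathcal{R}_D(h)-\hat{\mathcal{R}}_{S|\mathscr{L}_\eta}(h)\big)\Big]+M\sqrt{\tfrac{\log(1/\delta)}{2n}}.$$
A standard symmetrization step bounds the expected supremum by $2\mathfrak{R}_n(\mathscr{L}_\eta\circ\mathcal{H})$. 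Because $\mathscr{L}_\eta$ is $\eta$-Lipschitz, any $\epsilon$-cover of $\mathcal{H}$ in the empirical $L_2$ metric is mapped to an $\eta\epsilon$-cover of $\mathscr{L}_\eta\circ\mathcal{H}$, i.e.\ $\mathcal{N}(\mathscr{L}_\eta\circ\mathcal{H},\epsilon)\le\mathcal{N}(\mathcal{H},\epsilon/\eta)$, so it suffices to control the metric entropy of $\mathcal{H}$ itself, with the factor $\eta$ entering through the covering radius (which is why it ends up in the denominator of the rate).

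Next I would establish the covering number of the observer class. Each hypothesis is a composition $\phi_{\theta_2}^-\circ h\circ T^{-1}\circ(\xi\mapsto A\circ\xi+B(x))\circ T\circ h^{-1}\circ\phi_{\theta_1}$, unrolled $d$ times for $d$-step forecasting. Every block is Lipschitz with a bounded constant: $\mathrm{LeakyReLU}$ and $\mathrm{GroupNorm}$ are $1$-Lipschitz, the $\mathrm{Conv2d}$, $\mathrm{Conv2dTrans}$ and Inception maps have operator norms controlled by the norms of their kernels, and — crucially — the Hadamard operator $\xi\mapsto A\circ\xi$ is a strict contraction because every entry of $A$ lies in $(0,1)$. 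This contraction makes the recursive unrolling accumulate parameter perturbations as a convergent geometric series rather than exploding with $d$; it is the same structural fact that drives the convergence guarantee \eqref{y_convergence}. Confining the learnable weights to norm-bounded sets and propagating a perturbation of each block's parameters through this Lipschitz composition yields a bound of the form $\log\mathcal{N}(\mathcal{H},\epsilon,\|\cdot\|_{L_2})\lesssim(\|X\|_F\,\mathscr{R}/\epsilon)^{q}$, where $\mathscr{R}$ aggregates the per-layer Lipschitz constants and weight norms — the quantity defined in \eqref{eq_rademancher} — and $q$ is the effective exponent produced by the layerwise counting and the bounded input $\|X\|_F$.

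Inserting this into Dudley's entropy integral,
$$\mathfrak{R}_n(\mathcal{H})\le\inf_{\alpha>0}\Big(4\alpha+\frac{12}{\sqrt n}\int_{\alpha}^{\infty}\sqrt{\log\mathcal{N}(\mathcal{H},\epsilon,\|\cdot\|_{L_2})}\,d\epsilon\Big),$$
evaluating the integral and balancing the truncation level $\alpha$ against the entropy term produces the rate $n^{-5/8}(\|X\|_F\mathscr{R}/\eta)^{1/4}$; carrying the factor $2$ from symmetrization and the numerical constants from Dudley's bound through the chain gives the stated constant $32$. Substituting back into the first display then yields \eqref{eq_genebound}.

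The hard part will be the metric-entropy estimate for this specific architecture. One must (i) fix norm-bounded parameter sets so the cover is finite, (ii) track how a perturbation in one block is amplified by the Lipschitz constants of all downstream blocks, including across the recursive unrolling where only the $A\in(0,1)^{c^*\times h^*\times w^*}$ contraction prevents a blow-up in $d$, and (iii) correctly compose Lipschitz constants through the skip connections and the parallel branches inside the Inception modules. Once the entropy bound with the right dependence on $\|X\|_F$ and $\mathscr{R}$ is in hand, the remaining symmetrization, contraction, and Dudley steps are routine.
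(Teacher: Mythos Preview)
Your high-level chain (McDiarmid/symmetrization $\to$ Lipschitz contraction $\to$ covering number $\to$ Dudley $\to$ Rademacher bound) matches the paper's, and the final two steps are essentially identical. But you have misidentified what the covering-number argument is actually about, in two ways.

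First, the theorem concerns a \emph{single} forward pass of the observer, not the $d$-step recursive unrolling. In the paper the multi-step error is handled separately by the convergence theorem; the generalization bound is explicitly stated as applying ``every time the spatiotemporal observer makes a single-step prediction.'' So there is no geometric series in $d$ to control, and the entrywise constraint $A_{ijl}\in(0,1)$ plays \emph{no role} in this proof --- $A$ is simply treated as one more bounded linear layer with $\|A\|_F\le a_A$, $\|A\|_\sigma\le s_A$. Your intuition that the contraction is ``crucially'' what keeps the entropy bound finite is therefore off target; it is the right idea for the convergence result, not for this one.

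Second, the paper does not build the covering bound by propagating perturbations through a monolithic Lipschitz composition. Instead it recognises the observer as an instance of the \emph{stem--vine} framework: the sequential path $\phi_{\theta_2}^-\!\circ h\circ T^{-1}\circ(A\circ\cdot)\circ T\circ h^{-1}\circ\phi_{\theta_1}$ is the stem $F_S$, and the additive branch $B(\cdot)$ (fed by $\phi_{\theta_1}$) is a single vine $F_V$. The per-branch covering numbers come directly from an existing CNN covering lemma (after observing that an Inception block collapses to a single convolution via distributivity), each of the form $\ln\mathcal{N}\le(\|X\|_F\mathscr{R}_{\bullet}/\epsilon)^{1/2}$, and a stem--vine product bound combines them into $\ln\mathcal{N}(\mathcal{H})\le(\|X\|_F\mathscr{R}/\epsilon)^{1/2}$ with $\mathscr{R}=(\sqrt{\mathscr{R}_S}+\sqrt{\mathscr{R}_V})^2$. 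That fixed exponent $1/2$ is what, after the square root in Dudley and optimisation over $\alpha$, produces the $n^{-5/8}(\cdot)^{1/4}$ rate and the constant $32$; your proposal leaves the exponent $q$ unspecified and so does not explain where those numbers come from. The genuinely hard part is thus not the recursion, but (i) reducing Inception modules to ordinary convolutional layers and (ii) invoking the stem--vine product bound to handle the additive $B$ path and the skip connections --- both of which your plan omits.
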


\begin{proof}
		Given a neural network with $L$ layers, we denote the input of the $i_{th}$ layer as $X_i=(x_1^i,..,x_n^i)^T \in\mathbb{R}^{d_i\times n}, i=1,...,T$ with the number of the training samples $n$ and the size of each sample $d_i$. The nonlinear function $\sigma_i$ is assumed to be $\rho_i$-Lipschitz satisfying $\sigma_i(0)=0$.
	
	Following \cite{lin2019generalization}, the convolution operation between a convolutional weight $W_i=(w^1,..,w^c) \in\mathbb{R}^{c\times r}$ and input $X_i$, where $c$ denotes the number of kernels and $r$ denotes the size of kernels, can be formulated as
	\begin{equation} \label{eq_conv}
		\mu_i(W_i,X_i)=\gamma_i(W_i)X_i
	\end{equation}
	where $\gamma_i(W_i)\in\mathbb{R}^{d_i\times d_{i-1}} $ is the matrix generated by convolutional weight $W_i$.
	
	The $Inception$ module used in this work has two parts. The first part is $1\times 1$ convolution, which could be expressed using Eq. \eqref{eq_conv}. The second part can also be written into Eq.\eqref{eq_conv} via the distributivity of convolution, $\sum\mu_i(W,X)=\mu_i(\sum W,X)$, where $\sum W$ is a kernel generated by summation of all kernels. Two consecutive convolutions are equivalent to one convolution. Therefore, the $Inception$ module can be expressed as a single CNN layer.
	
	A neural network with $L$ layers can be formulated as
	\begin{equation}\label{expre_nn}
		F(X):= \sigma_L(\gamma_L(W_L)\sigma_{L-1}(\gamma_{L-1}(W_{L-1})...\sigma_1(\gamma_1(W_1)X_1)...))
	\end{equation}
	where $X_1$ is the first layer input and also the model input $X$.
	
	Then, the functions parameterized by CNNs in Section \ref{function_app} are expressed as $F_{\phi_{\theta_1}}$, $F_{\phi_{\theta_2}^-}$
	$F_{h^{-1}}$, $F_{T}$, $F_{T^{-1}}$, $F_{h}$. The spatiotemporal observer is a special case of the stem-vine framework \cite{he2020resnet}. The stem can be formulated as
	\begin{equation}\label{expre_stem}
		F_S(X):=F_{\phi_{\theta_2}^-}(F_{h}(F_{T^{-1}}(AF_{T}(F_{h^{-1}}(F_{\phi_{\theta_1}}(X_1))))))
	\end{equation}
	where weight $A\in\mathbb{R}^{d_A\times d_A}$ is obtained from $A$ in Eq.\eqref{eq_pre}. Considering operation with $A$ as a single layer, the total number of layers in $F_S(X)$ is $L_S=2*(N_S+N_h+N_T)+1$.
	The vine computes a function with input $X_V=F_{\phi_{\theta_1}}(X_1)$ as
	\begin{equation}\label{expre_vine}
		F_V(X)=\sigma_B(\gamma_B(W_B)X_V)
	\end{equation}
	where $\sigma_B$ and $\gamma_B(W_L)$ are from $B$. If we set $B$ as linear convolution operation without nonlinear activation in Eq.\eqref{eq_pre}, $\sigma_B=1$.
	
	We then give covering number bound for $F_S$ and $F_V$. Denote the hypothesis space computed by $F_S$ and $F_V$ are $\mathcal{H}_S$ and $\mathcal{H}_V$. Let nonlinearity $(\sigma_1,...,\sigma_A,...,\sigma_{L_S},\sigma_B)$ be a fixed function where $\sigma_i$ is assumed to be $\rho_i$-Lipschitz satisfying $\rho_i(0)=0$. Let $(a_1,...,a_A,...,a_{L_S},a_B)$ and $(s_1,...,s_A,...,s_{L_S},s_B)$ be some real values. Assuming $\| Wi \|_F\le a_i$ and $\|\gamma_i(W_i)\|_\sigma\le s_i$, $\|W_B\|\le a_B$ and $\|\gamma_B(W_B)\|_\sigma\le s_B$, and $\| A \|_F\le a_A$, and $\|A\|_\sigma\le s_A$. Then, using Lemma 14 in \cite{lin2019generalization}, we have the following covering number bounds for $F_S$ and $F_V$.
	\begin{equation}
		\ln \mathcal{N}\left( \mathcal{H}_S,\epsilon,\left\| \cdot  \right\|_F \right)\le \left( \frac{\left\| X \right\|_F\mathscr{R}_S}{\epsilon} \right)^{1/2}
	\end{equation}
	with $\mathscr{R}_S$ defined as
	\begin{equation}\label{eq_RS}
		\mathscr{R}_S=\left( 2\prod_{i=1}^{L_S}\rho_is_i \right)\left( \frac{d_A^4a_A}{s_A}+\sum_{i\neq A }^{L_S-1}\frac{c_i^2r_i^2a_i\sqrt{d_i/c_i}}{s_i} \right)L_S^2
	\end{equation}
	
	\begin{equation}
		\begin{split}
			\ln \mathcal{N}\left( \mathcal{H}_V,\epsilon,\left\| \cdot  \right\|_F \right)& \le \left( \frac{\left\| X_V \right\|_F\mathscr{R}_V^{'}}{\epsilon} \right)^{1/2} \\
			& = \left( \frac{\left\| F_{\phi_{\theta_1}}(X_1) \right\|_F\mathscr{R}_V^{'}}{\epsilon} \right)^{1/2} \\
			& \le \left( \frac{\left\| X \right\|_F \prod_{i=1}^{N_S}\rho_is_i \mathscr{R}_V^{'}}{\epsilon} \right)^{1/2} \\
			& = \left( \frac{\left\| X \right\|_F \mathscr{R}_V}{\epsilon} \right)^{1/2} \\
		\end{split}
	\end{equation}
	
	where $\mathscr{R}_V^{'}$ and $\mathscr{R}_V$ are defined as
	\begin{equation}
		\mathscr{R}_V^{'} =2\rho_Bc_B^2r_B^2a_B\sqrt{d_i/c_i}
	\end{equation}
	\begin{equation}\label{eq_RV}
		\mathscr{R}_V=2\prod_{i=1}^{N_S}\rho_is_i \left( \rho_Bc_B^2r_B^2a_B\sqrt{d_B/c_B} \right)
	\end{equation}
	
	Theorem 1 in \cite{he2020resnet} introduces that the covering number of a deep neural network constituted by a stem and a series of vines is upper bounded by the product of the covering numbers of stem and vines. Thus, we can obtain the covering number bound for the
	spatiotemporal observer.
	
	Suppose the hypothesis space computed by the spatiotemporal observer is $\mathcal{H}$. Then we have
	\begin{equation}
		\ln \left( \mathcal{N}\left( \mathcal{H},\epsilon,\left\| \cdot  \right\|_F \right) \right)
		\le \ln\left( \mathcal{N}\left( \mathcal{H}_S,\epsilon,\left\| \cdot  \right\|_F \right)\cdot \mathcal{N}\left( \mathcal{H}_V,\epsilon,\left\| \cdot  \right\|_F \right) \right) \\	
	\end{equation}
	
	Given $n$ training samples $S=((x_1, y_1), ...,(x_n, y_n))$. Assume the loss function  $\mathscr{L}_\eta$ is $\eta$-Lipschitz and is upper bounded by $M>0$. We define $\mathscr{L}_\eta$ with respect to $\mathcal{H}$ as
	\begin{equation}
		\mathcal{H}_\eta:=\left\{ (x,y)\to \mathscr{L}_\eta(h(x),y) : h\in \mathcal{H}\right\}
	\end{equation}
	Since  $\mathscr{L}_\eta$ is $\eta$-Lipschitz, we have
	\begin{equation}
		\begin{split}
			&\ln\mathcal{N}\left( \mathcal{H}_{\eta|S},\epsilon,\left\| \cdot  \right\|_F \right)
			\le \ln\mathcal{N}\left( \mathcal{H}_{|X},\eta\epsilon,\left\| \cdot  \right\|_F \right)\\
			&\le \ln\left( \mathcal{N}\left( \mathcal{H}_S,\eta\epsilon,\left\| \cdot  \right\|_F \right)\cdot \mathcal{N}\left( \mathcal{H}_V,\eta\epsilon,\left\| \cdot  \right\|_F \right) \right) \\
			&=\left( \frac{\left\| X \right\|_F\mathscr{R}_S}{\eta\epsilon} \right)^{1/2}+\left( \frac{\left\| X \right\|_F \mathscr{R}_V}{\eta\epsilon} \right)^{1/2}\\
			&=\left( \frac{\left\| X \right\|_F\mathscr{R}}{\eta\epsilon} \right)^{1/2}\\
		\end{split}
	\end{equation}
	where $\mathscr{R}$ is expressed as
	\begin{equation}\label{eq_rademancher}
		\mathscr{R}=\left( \sqrt{\mathscr{R}_S} + \sqrt{\mathscr{R}_V}\right)^2
	\end{equation}
	
	We then relate the covering bound for the spatiotemporal observer to the empirical Rademacher complexity by Dudley’s entropy integral.
	\begin{equation}
		\begin{split}
			&\mathfrak{R}_S(\mathcal{H}_\eta)
			\le \inf_{\alpha > 0} \left( \frac{4\alpha}{\sqrt{n}} +\frac{12}{n}\int_{\alpha}^{\sqrt{n}}\sqrt{ln\mathcal{N}\left( \mathcal{H}_{\eta|S},\epsilon,\left\| \cdot  \right\|_F \right)}d\epsilon\right)\\
			&\le \inf_{\alpha > 0} \left( \frac{4\alpha}{\sqrt{n}} +\frac{12}{n}\int_{\alpha}^{\sqrt{n}}\left( \frac{\left\| X \right\|_F\mathscr{R}}{\eta\epsilon} \right)^{1/4}d\epsilon\right) \\
			&= \inf_{\alpha > 0} \left( \frac{4\alpha}{\sqrt{n}} +\frac{16}{n}\left( \frac{\left\| X \right\|_F\mathscr{R}}{\eta} \right)^{1/4}\left( n^{3/8}-\alpha^{3/4} \right)\right)\\
		\end{split}
	\end{equation}
	Let the first derivative of the right-hand side equal to zero, we obtain the minimum at $\alpha= \frac{81\left\| X \right\|_F\mathscr{R}}{\eta n^2}$. We further have  the Rademacher complexity
	\begin{equation}\label{eq_erc}
		\begin{split}
			\mathfrak{R}_S(\mathcal{H}_\eta)
			&\le16n^{-5/8}\left( \frac{\left\| X \right\|_F\mathscr{R}}{\eta} \right)^{1/4} -\frac{108\left\| X \right\|_F\mathscr{R}}{\eta n^{5/2}}\\
			&\le 16n^{-5/8}\left( \frac{\left\| X \right\|_F\mathscr{R}}{\eta} \right)^{1/4}\\
		\end{split}
	\end{equation}
	
	The generalization bound for regression is introduced in \cite{mohri2018foundations}. The generalization error $\mathcal{R}_D(h)$ with respect to target $f$ is bounded as follows.
	
	(\textit{Theorem 11.3}, \cite{mohri2018foundations}.)
	Given hypothesis $\mathcal{H}$, training samples $S=((x_1, y_1), ...,(x_n, y_n))$, with probability at least $1-\delta$, each hypothesis $h\in\mathcal{H}$ satisfies
	\begin{equation}\label{ref_eq_genebound}
		\mathcal{R}_D(h) \le \hat{\mathcal{R}}_{S|\mathscr{L}_\eta}(h) + 2\mathfrak{R}_S(\mathcal{H}_\eta)
		+ M\sqrt{\frac{\log\frac{1}{\delta}}{2n}} 
	\end{equation}
	
	Substituting the Rademacher complexity $\mathfrak{R}_S(\mathcal{H}_\eta)$ from Eq. \eqref{eq_erc} into Rademacher complexity regression bounds Eq. \eqref{ref_eq_genebound}, we obtain the generalization bound Eq.\eqref{eq_genebound} for the spatiotemporal observer.
	
	The proof is completed.
\end{proof}

\autoref{thm:n_prediction} implies that every time the spatiotemporal observer makes a single-step prediction, the prediction error is upper bounded. 

\subsubsection{Convergence Analysis}
For long sequences, the model would make single-step predictions continuously over time. The spatiotemporal observer has good convergence properties, such that the predicted values will gradually converge to the ground truth. We, therefore, derive the following theorem.

\begin{theorem}\label{thm:stobserver}
	Coefficients $A\in\mathbb{R}^{c^*\times h^*\times w^*}$ is a full tensor and  its element $A_{ijl} \in (0, 1), \quad \forall i \in \{1, ..., c^*\}, \forall j \in \{1, ..., h^*\}, \forall l \in \{1,..., w^*\}$, $\xi_k \in\mathbb{R}^{c^*\times h^*\times w^*}$. $B: \mathbb{R}^{c\times h\times w}\to \mathbb{R}^{c'\times h'\times w'}$ is a linear function. Let $T: \mathbb{R}^{c'\times h'\times w'}\to \mathbb{R}^{c^*\times h^*\times w^*}$ be a continuous map. Assume that:
	\begin{enumerate}
		\item For any $z_k \in\mathbb{R}^{c'\times h'\times w'}$, $T$ is uniformly injective and satisfies
		\begin{equation}
			T(f(z)) =  A \circ T(z) +  B(x)
		\end{equation}
		\item There exists a function $\alpha $ for any given $(z_1,z_2)$, the following equation holds
		\begin{equation}
			|z_1-z_2| \le  \alpha (|T(z_1)-T(z_2)|)
		\end{equation}
	\end{enumerate}
	Then there exists a function $T^*: \mathbb{R}^{c'\times h'\times w'}\to \mathbb{R}^{c^*\times h^*\times w^*}$ such that $\hat{z}_k = T^*(\xi_k)$ are the solutions of the spatiotemporal observer for system \eqref{st__system}.
\end{theorem}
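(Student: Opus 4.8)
The plan is to reproduce the classical KKL convergence argument with the matrix product $A'\xi'$ replaced by the Hadamard product $A\circ\xi$, carrying everything in the Frobenius norm. First I would introduce the estimation error tensor $e_k := \xi_k - T(z_k)$ and compute its dynamics. Applying assumption~(1) along the trajectory, with $x_{k-1}=h(z_{k-1})$, gives $T(z_k)=T(f(z_{k-1}))=A\circ T(z_{k-1})+B(x_{k-1})$; subtracting this from the observer recursion $\xi_k=A\circ\xi_{k-1}+B(x_{k-1})$ in \eqref{st_observer} cancels the common input term and yields the autonomous linear recursion $e_k=A\circ e_{k-1}$, hence $e_k = A^{\circ k}\circ e_0$, the $k$-fold entrywise power. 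Since every entry satisfies $A_{ijl}\in(0,1)$ and there are only finitely many of them, $r:=\max_{ijl}A_{ijl}<1$, so $\|e_k\|_F\le r^{\,k}\|e_0\|_F\to 0$ geometrically for every initial condition. This is the straightforward half, and it is precisely where the original constraint $\|A'\|_\sigma<1$ re-enters through the diagonalization discussed before the Definition.

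Next I would construct $T^*$. Because $T$ is continuous and uniformly injective, it is a homeomorphism onto its image $T(\mathcal{Z})$, where $\mathcal{Z}\subseteq\mathbb{R}^{c'\times h'\times w'}$ denotes the set of states; assumption~(2) says exactly that the inverse $T^{-1}:T(\mathcal{Z})\to\mathcal{Z}$ is uniformly continuous with modulus $\alpha$. I would then extend $T^{-1}$ to a map $T^*:\mathbb{R}^{c^*\times h^*\times w^*}\to\mathbb{R}^{c'\times h'\times w'}$ on the whole codomain while preserving uniform continuity — via a McShane/Kirszbraun-type extension applied coordinate by coordinate, or, if one assumes the reachable set compact, this is immediate — so that $|T^*(u_1)-T^*(u_2)|\le\alpha(|u_1-u_2|)$ still holds at least for $u_1\in T(\mathcal{Z})$. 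Setting $\hat z_k:=T^*(\xi_k)$ as in \eqref{st_observer} and using $z_k=T^{-1}(T(z_k))=T^*(T(z_k))$,
\begin{equation*}
	|z_k-\hat z_k| = |T^*(T(z_k))-T^*(\xi_k)| \le \alpha\big(|T(z_k)-\xi_k|\big) = \alpha(\|e_k\|_F).
\end{equation*}
As $\alpha$ is of class $\mathcal{K}$ (continuous, vanishing at the origin) and $\|e_k\|_F\to 0$, the right-hand side tends to $0$, which is exactly condition \eqref{y_convergence}; hence \eqref{st_observer} with this $T^*$ is a spatiotemporal observer for \eqref{st__system}.

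The main obstacle I anticipate is the second step rather than the error algebra: guaranteeing that $T^{-1}$ extends off the image $T(\mathcal{Z})$ to all of $\mathbb{R}^{c^*\times h^*\times w^*}$ without destroying the uniform-continuity estimate, since the iterates $\xi_k$ need not lie in $T(\mathcal{Z})$ — only their limit set does. This is where one must invoke either compactness of the reachable set (so that $T(\mathcal{Z})$ is closed and the extension is trivial) or a quantitative extension theorem, and also where one must make explicit that $\alpha$ is genuinely of class $\mathcal{K}$ so that $\alpha(\|e_k\|_F)\to 0$. The recursion $e_k=A\circ e_{k-1}$ and its geometric decay are routine once the common input term is seen to cancel.
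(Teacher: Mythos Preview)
Your proposal is correct and follows essentially the same path as the paper's proof: derive the autonomous error recursion $e_k = A\circ e_{k-1}$ by cancelling the common $B(x_{k-1})$ term, use the entrywise bound $A_{ijl}\in(0,1)$ to get geometric decay, then invoke a McShane-type extension of $T^{-1}$ to a globally defined $T^*$ and conclude via the class-$\mathcal{K}$ modulus $\alpha$. Your discussion of the extension step is in fact more careful than the paper's, which simply cites McShane's 1934 extension theorem without elaboration.
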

\begin{proof}
	let $\xi_k(x_0,z_0, \xi_0)$, abbreviated as $\xi_k$, be the solution of equation \eqref{st_observer}. Since every element in the full tensor $A$ meets $A_{ijl} \in (0, 1), \quad \forall i \in \{1,..., c^*\}, \forall j \in \{1,..., h^*\}, \forall l \in \{1,..., w^*\}$, and $T$ ensures that \eqref{st_observer} is satisfied, thus
	\begin{align*}
		& \lim_{k \to +\infty} |\xi_k-\hat{\xi}_k|                                                      \\
		& =\lim_{k \to +\infty } |T(z_k) - \hat{\xi}_k |                                                \\
		& =\lim_{k \to +\infty } |( A\circ T(x_{k-1}) +B(x_{k-1}) ) - (A\circ \hat{\xi}_{k-1} +B(x_{k-1}) )| \\
		& =\lim_{k \to +\infty } |A^{k-1}\circ\left( T(z_0) - \hat{\xi}_0 \right)|                           \\
		& =0
	\end{align*}
	
	Since $T$ is uniformly injective, it means that there exists a class $\mathcal{K}^\infty$ function $\alpha $. Based on the definition of $\mathcal{K}^\infty$ function, $\alpha $ is a nondecreasing positive definite function.
	
	Because of the uniform injectivity of $T$, there exists a pseudo-inverse $T^{-1}: \mathbb{R}^{m\times h\times w}\to \mathbb{R}^{c\times h\times w}$ such that the following equation holds.
	\begin{equation}
		\label{T1}
		|T^{-1}(\xi_1) - T^{-1}(\xi_2)| \le \alpha (|\xi_1-\xi_2|)
	\end{equation}
	Based on Theorem 2 in \cite{mcshane1934extension}, there exist $T^{*}$, which is an extension of $T^{-1}$, satisfying \eqref{T1}. Let $z = T^{*}(\xi)$, $\hat{\xi}_k = T(z)$, and we can get
	\begin{equation}
		|z_k - \hat{z}_k| \le \alpha (|\xi_k-\hat{\xi}_k|)
	\end{equation}
	As $k \to +\infty $, $|z_k - \hat{z}_k|\to 0$. 
	
	Here we complete the proof.
\end{proof}
\autoref{thm:n_prediction} and \autoref{thm:stobserver} provide us with theoretical guarantees, ensuring that the proposed model has an upper bound when making predictions and will gradually converge over time. 

\section{Experiments}
\label{sec: Experiments}
\subsection{Implementation}
To measure the performance and effectiveness of our proposed framework, we evaluate the spatiotemporal observer for one-step-ahead and multi-step-ahead forecasting cases using a real-world traffic flow dataset (TaxiBJ \cite{zhang2017deep}), a synthetic dataset (Moving MNIST) and a radar echo dataset (the CIKM
AnalytiCup 2017 competition dataset\footnote{\href{https://tianchi.aliyun.com/competition/entrance /231596/information}{https://tianchi.aliyun.com/competition/entrance/231596/inform
		ation}}, abbreviated as CIKM). The source code and trained models are available online 
\href{https://github.com/leonty1/Spatiotemporal-Observer}{https://github.com/leonty1/Spatiotemporal-Observer}.

Table \ref{tab:exp_setup} lists the main hyperparameters used in each experiment. Each dataset has $N_{train}$, $N_{val}$, and $N_{test}$ samples in the training, validation, and testing process. The time length of input and output are $T_{in}$ and $T_{out}$. The spatial encoder has $N_S$ convolution blocks with $C_S$ channels. The spatial decoder has $N_S$ convolution transpose blocks with $C_S$ channels. The dynamic transition function $T$ and its inverse $T^{-1}$ have $N_T$ inception blocks with $C_T$ channels. $h$ and $h^{-1}$ have $N_h$ inception blocks with $C_h$ channels. We use Adam \cite{kingma2014adam} as the optimizer and train the model using loss \eqref{loss_all} with coefficients listed in Table \ref{tab:exp_setup}. Batch size, learning rate (LR), and epochs are also given in Table \ref{tab:exp_setup}. We implement the proposed model using Pytorch \cite{paszke2019pytorch}. All experiments are conducted on GeForce RTX 3090 GPUs.

\begin{table*}[]
	\centering
	\caption{Experimental Setup on Datasets.}
	\label{tab:exp_setup}
	\resizebox{7in}{!}{%
		\begin{tabular}{cccccccccccccccccccc}
			\hline
			Dataset & $N_{train}$ & $N_{val}$ & $N_{test}$ & $(C, H, W)$ & $T_{in}$ & $T_{out}$ & $N_S$ & $C_S$ & $N_h$ & $C_h$ & $N_T$ & $C_T$ & $Batch$ & LR & Epoch&$\lambda_0$&$\lambda_1$&$\lambda_2$&$\lambda_3$ \\ \hline
			TaxiBJ       & 19560 & -    & 1334 & (2, 32, 32)   & 4  & 4  & 3 & 64  & 1 & 256 & 1 & 256 & 8  & 0.01 & 50  & 1 & 0.1 & 1 & 1\\
			Moving MNIST & 10000 & 3000 & 10000 & (1, 64, 64)   & 10 & 10 & 4 & 64  & 2 & 512 & 2 & 512 & 16 & 0.01 & 2000& 1 & 0   & 0 & 0 \\
			CIKM         & 8000  & 2000 & 4000 & (1, 128, 128) & 5  & 10 & 2 & 8   & 1 & 32  & 1 & 32  &  2 & 0.01 & 30  & 1 & 1   & 0 & 0  \\ \hline
		\end{tabular}%
	}
\end{table*}

\subsection{One-step-ahead Traffic Flow Forecasting}
We first test our model's ability for short-term forecasting using the TaxiBJ dataset, which records the spatiotemporal trajectory data of the taxicab GPS in Beijing. We preprocess and split the data following the settings in \cite{zhang2017deep}. Each sample contains 8 consecutive frames of shape $2 \times 32 \times 32$. The two channels in the first dimension represent the traffic flow intensities of entering and leaving the same area. Following \cite{wang2019memory}, we normalize the data into [0, 1] and take 4 frames as input to predict the future 4 frames. The hybrid strategy uses a group size of $\delta=4$.

To evaluate our models quantitatively, we use the mean square error (MSE), the mean absolute error (MAE), andper-frame structural similarity index measure (SSIM) \cite{wang2004image} as evaluation metrics. A lower MSE, MAE, or higher SSIM indicates a better prediction result. We take nine existing models as the baseline for comparison. We directly use the results from original or published works to avoid bias. Specifically, the frame-wise MSEs are mainly referenced from \cite{wang2019memory}, and other metrics are reused from \cite{gao2022simvp}.

As shown in Table \ref{table_TaxiBJ}, our spatiotemporal observer, abbreviated as ST-Observer, makes a successful prediction on TaxiBJ. We mark the best results in boldface and the second with an underline. The results show that the ST-Observer almost outperforms all baselines regarding MSE, MAE, and SSIM. 

We choose the entering traffic flow data for visualization, seeing Fig. \ref{figure_TaxiBJ}. |GT-PF| denotes the absolute errors between the ground truth and predicted frames. The small prediction error indicates the high accuracy of the ST-Observer. Therefore, the results show that our model achieves accurate one-step-ahead prediction in traffic flow prediction.

\begin{figure}[!t]
	\centering
	\includegraphics[width=3.5in]{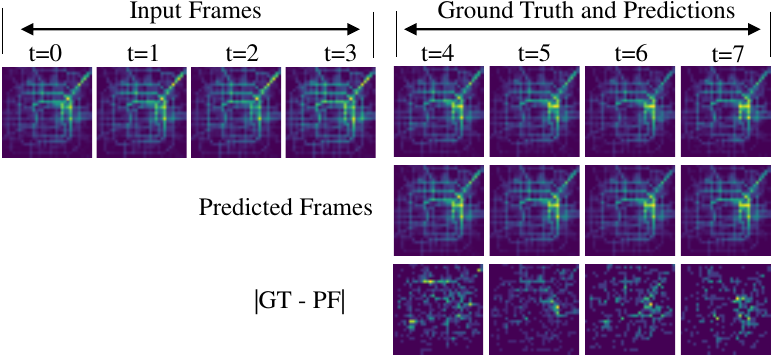}
	\caption{Visualization of one-step-ahead traffic flow forecasting on TaxiBJ.}
	\label{figure_TaxiBJ}
\end{figure}
\begin{table}[]
	\centering
	\caption{Performances of different methods on TaxiBJ}
	\label{table_TaxiBJ}
	\resizebox{\columnwidth}{!}{%
		\begin{tabular}{cccccccc}
			\hline
			\multirow{2}{*}{Model}&   \multicolumn{4}{c}{MSE}  &  & \multirow{2}{*}{MAE} & \multirow{2}{*}{SSIM} \\
											& Frame 1  & Frame 2 & Frame 3& Frame 4  & Avg.  &    &   \\ \hline
			ConvLSTM \cite{shi2015convolutional} &  -   & -    & -    & -   & 0.485    & 17.7    & 0.978     \\
			ST-ResNet \cite{zhang2017deep}     & 0.460  & 0.571  & 0.670  & 0.762    & 0.616    & -   & -   \\
			VPN \cite{kalchbrenner2017video}   &  0.427 & 0.548 & 0.645 & 0.721   & 0.585  & -   & -  \\
			FRNN \cite{oliu2018folded}         &  0.331 & 0.461  & 0.518 & 0.619  & 0.482 & -  & -  \\
			PredRNN \cite{wang2017predrnn}     & 0.318 & 0.427  & 0.516   & 0.595  & 0.464 & 17.1 & 0.971    \\
			PredRNN++ \cite{wang2018predrnn++} & 0.319 & 0.399    & 0.500  & 0.573  & 0.448  & 16.9 & 0.977    \\
			E3D-LSTM \cite{wang2018eidetic}    & -  & -   & -  & -   & 0.432   & 16.9  & 0.979     \\
			MIM \cite{wang2019memory}          & \underline{0.309}  & \underline{0.390} & \underline{0.475}  & \underline{0.542}   & 0.429  & 16.6 & 0.971      \\ 
			PhyDNet	\cite{guen2020disentangling}						& - & - & -  & -  & 0.419 & \underline{16.2}& \underline{0.982}  \\
			SimVP \cite{gao2022simvp}	&-	& - & -  & -  & \underline{0.414} & \underline{16.2}& \underline{0.982}  \\ \hline
			\textbf{ST-Observer}  			&\textbf{0.296}&\textbf{0.376} & \textbf{0.449 }&\textbf{0.501}& \textbf{0.406}&\textbf{15.7}&\textbf{0.983}\\ \hline
		\end{tabular}%
	}
\end{table}
\subsection{Long-term Moving MNIST Prediction}
To evaluate the performance on long-term prediction, we apply our model to predict the future 10 frames by taking the previous 10 frames as inputs in the synthetic Moving MNIST dataset. The group size is set as $\delta=10$. We follow the method described in \cite{srivastava2015unsupervised} to generate 10000 sequences from static MNIST dataset \cite{lecun1998gradient} for training. To avoid bias, we use an open dataset in validation and test phases \cite{srivastava2015unsupervised}. The validation set contains 3000 sequences, and the test set has 10000 sequences. Each sequence consists of 20 frames showing two digits moving in a 64 x 64 box. 

Table \ref{table_MovingMNIST} gives the results of different models in terms of MSE, MAE, and SSIM for long-term prediction on the Moving MNIST dataset. Our ST-Luenberger outperforms all baseline models in terms of SSIM, MAE, and MSE. 
We also compare the model complexity among baselines and our model in terms of GPU memory (per sample) and FLOPs (per frame), which are reused from \cite{gao2022simvp, yu2019efficient}. Low memory consumption and FLOPs of the ST-Observer indicate that it requires small computational intensity and resources and can be implicated efficiently.

Furthermore, we visualize and compare the predicted frames of ConvLSTM, PhyDnet, reused from \cite{tan2023openstl}, and our method. Long-term prediction on Moving MNIST is challenging because there exist occlusions between the moving trajectories of two digits. When the moving digits overlap with each other, an information bottleneck occurs. A representative example in Fig. \ref{figure_MNIST} shows that our ST-Observer predicts the exact moving path of digits with well-preserved shapes. Only a subtle prediction error is observed between the predicted frames and the ground truth. Fig \ref{figure_frame_mse} shows quantitatively that our method has a smaller framewise MSE than ConvLSTM and PhyDnet. Therefore, the results show that our model can capture the complex dynamics of moving objects and make long-term forecasting with high accuracy.

\begin{table}[]
	\centering
	\caption{Complexity and performance comparison on Moving MNIST.}
	\label{table_MovingMNIST}
	\resizebox{3.0in}{!}{%
		\begin{tabular}{cccccc}
			\hline
			Models        & Memory (MB)	& FLOPs	(G)		& SSIM				& MAE   			& MSE   \\ \hline
			ConvLSTM     	&	1043 	& 107.4		&  0.707 			& 182.9 			& 103.3 \\
			TrajGRU \cite{shi2017deep}&-	&-				& 0.713 			& 190.1 			& 106.9 \\
			DFN \cite{jia2016dynamic}&	-	& - 			& 0.726				& 172.8 			& 89.0  \\
			FRNN& 717 &	80.1		& 0.813 			& 150.3 			& 69.7  \\
			VPN    			& 5206 	& 	309.6		& 0.870 & 131.0 			& 64.1  \\
			PredRNN     	& 1666 	&   192.9 	& 0.867 			& 126.1 & 56.8  \\
			CausalLSTM    	&  2017 	&   106.8		& 0.898 			& - 				& 46.5  \\
			MIM				& -	&  115.9 	& 0.910 			& 101.1				 & 44.2 \\
			E3D-LSTM    	& 2695 	&   381.3		& 0.920 			& - 				& 41.3  \\
			CrevNet \cite{yu2019efficient}& \underline{224} &	\underline{1.652}	& 0.947 			& - 				& 24.4  \\
			PhyDNet 		&	 \textbf{200}	& \textbf{1.633}	    & 0.947 			& 70.3 				& 24.4  \\
			SimVP  			&	412	& 1.676	    & \underline{0.948} 			& \underline{68.9}				& \underline{23.8}  \\ \hline
			\textbf{ST-Observer} 	&266&   2.104		& \textbf{0.954} 	& \textbf{63.9} 	& \textbf{21.2 } \\ \hline
		\end{tabular}%
	}
\end{table}
\begin{figure}[!t]
	\centering
	\includegraphics[width=3.4in]{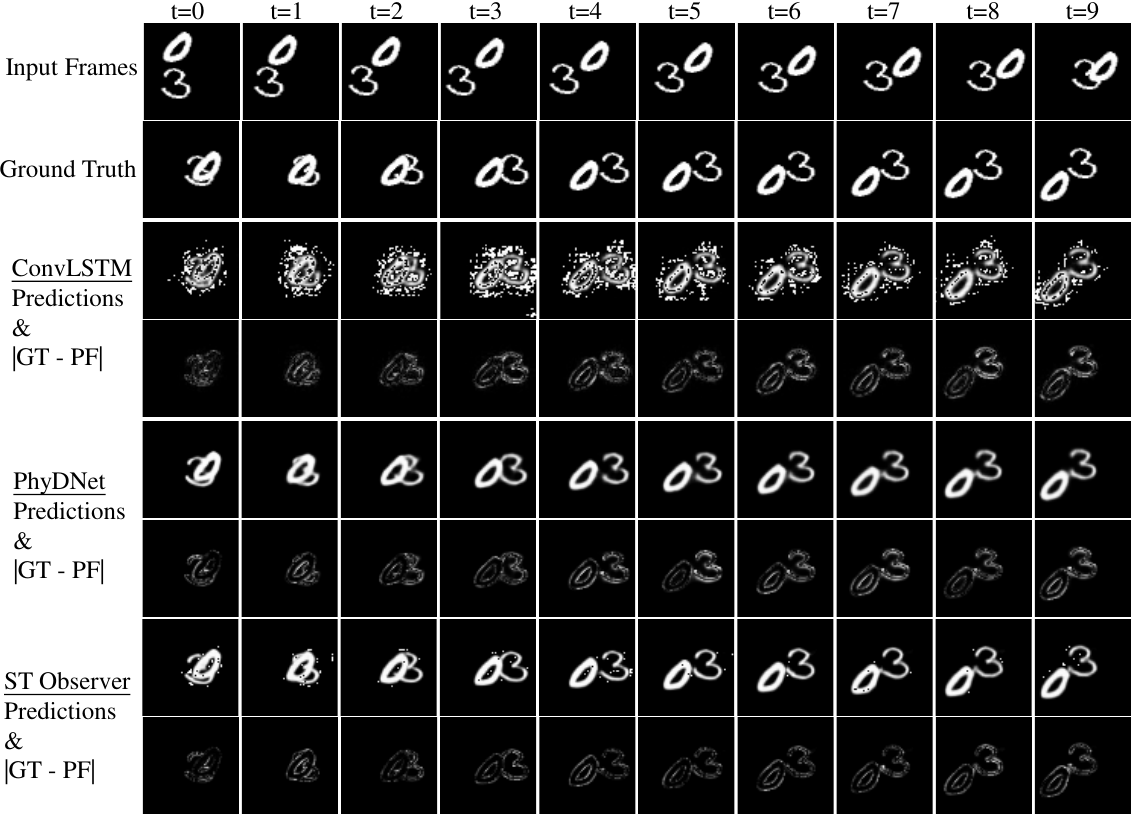}
	\caption{Visualization of prediction examples on Moving MNIST.}
	\label{figure_MNIST}
\end{figure}
\begin{figure}[!t]
	\centering
	\includegraphics[width=2.9in]{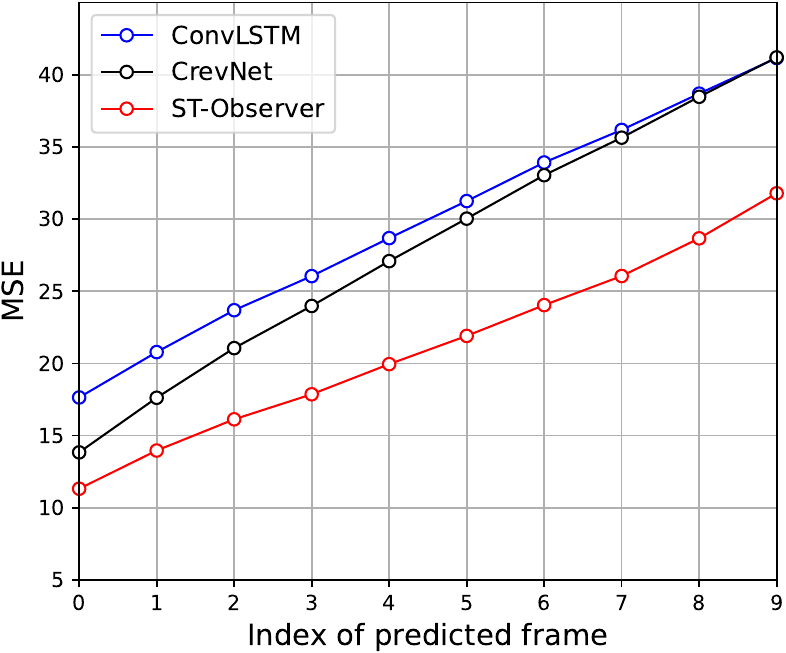}
	\caption{Framewise MSE comparison of baselines and ST-Observer.}
	\label{figure_frame_mse}
\end{figure}

\subsection{Multi-step-ahead Precipitation Nowcasting}
The CIKM dataset is an open dataset for precipitation forecasting, in which the radar echo maps cover the $101 \times 101 km^2$  in Shenzhen, China. Each pixel represents the average value of radar reflectivity in a square area of $1 \times 1 km^2$. Following the setting in \cite{zhang2022rap}, we preprocess and split the original dataset. Finally, the training set has 8000 sequences, the validation set contains 2000 sequences, and the test set has 4000 sequences. Each sequence records 15 consecutive snapshots within 90 minutes.

In this case, we aim to evaluate the model's ability to predict the future with flexible length. We use 5 frames as input to predict the future 10 frames. The hybrid strategy adopts a group size of $\delta=5$. Therefore, the ST-Observer would work recursively like RNN. During the forecasting stage, the model takes 5 frames as inputs to predict 5 future frames, then takes the predicted frames as inputs to predict the subsequent 5 frames.

We first transform the pixel value $p$ into the radar reflectivity by Eq. \eqref{dBZ} to evaluate the model's predictive performance.
\begin{equation}\label{dBZ}
	dBZ=p\times \frac{95}{255}-10
\end{equation}
Then, three commonly used metrics in precipitation nowcasting are employed to evaluate the results. They are MAE, Heidk Skill Score (HSS), and Critical Success Index (CSI). MAE measures the overall error of model prediction, while HSS and CSI measure the accuracy value after exceeding a certain threshold, that is, paying more attention to the error of extreme values. The binary results of predictions and ground truth are calculated by comparing their radar reflectivity with a threshold. We set the thresholds of radar reflectivity as 5, 20, and 40 $dBZ$. After that, by counting the binary results, we can obtain a list of the true positive (TP, prediction=1, truth=1), false negative (FN, prediction=0, truth=1), false positive (FP, prediction=1, truth=0), and true negative (TN, prediction=0, truth=0). Finally, we obtain the HSS and CSI using the following equations.
\begin{equation}\label{HSS}
	HSS = \frac{2(TP \times TN-FN\times FP)}{(TP+FN)(FN+TN)+(TP+FP)(FP+TN)}
\end{equation}
\begin{equation}\label{CSI}
	CSI = \frac{TP}{TP + FN + FP}
\end{equation}

Table \ref{Results_CIKM} gives the results on this dataset. Our model achieves the smallest MAE and the highest average CSI compared with all other models. Besides, it also shows that our model performs superior for nowcasting with a high threshold than other baseline models. For example, ConvLSTM has the best scores in CSI with the threshold of 5 and 20, but its results on the high threshold of 40 are worse than our method. For low thresholds, the ST-Observer makes the second-best results in HSS with thresholds of 5/20, and achieve best in CSI with thresholds of 20 and second-best in CSI with thresholds of 5.

We visualize the results on this dataset in Fig. \ref{Result_cikm}. The radar reflectivity values can refer to the color bar at the bottom of Fig. \ref{Result_cikm}. The model predicts a smooth result, and the sharp areas are filtered, which is the reason why our model does not achieve best in HSS and CSI of high thresholds. The difference map shows that the errors are mostly below 20 $dBZ$, and only a tiny part equals or exceeds 30 $dBZ$. Finally, we can conclude that the proposed method can make multi-step-ahead predictions and achieve high accuracy.

\begin{figure*}[!t]
	\centering
	\includegraphics[width=6in]{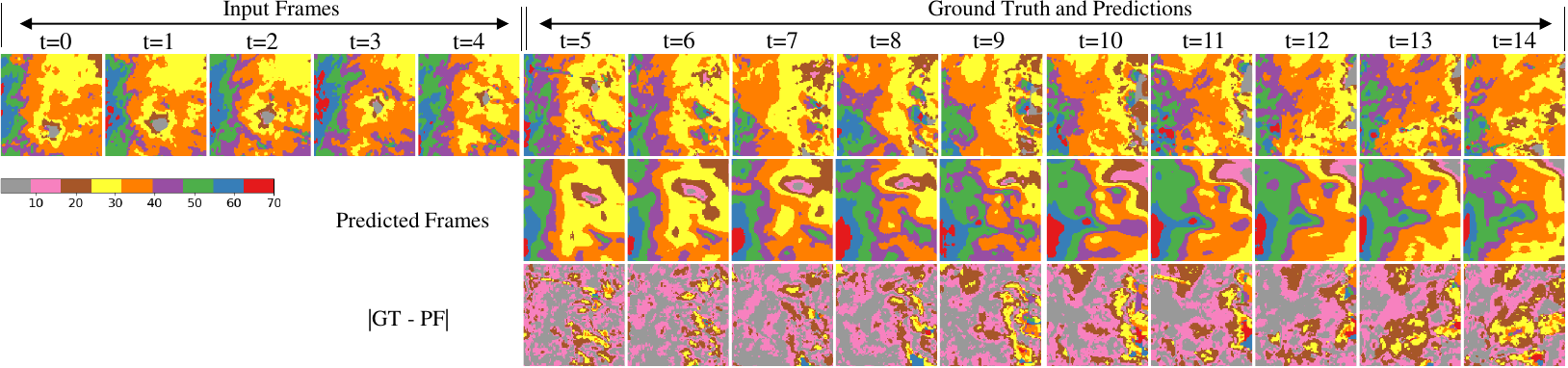}
	\caption{Visualization of prediction examples on CIKM.}
	\label{Result_cikm}
\end{figure*}

\begin{table}[]
	\centering
	\caption{Comparison Results on CIKM in terms of HSS, CSI, and MAE.}
	\label{Results_CIKM}
	\resizebox{\columnwidth}{!}{%
		\begin{tabular}{cccccccccc}
			\hline
			\multirow{2}{*}{Models} & \multicolumn{4}{c}{HSS}           & \multicolumn{4}{c}{CSI}           & \multirow{2}{*}{MAE} \\
			& 5      & 20     & 40     & avg.   & 5      & 20     & 40     & avg.   &                      \\ \hline
			ConvLSTM                & \textbf{0.7031} & 0.4857 & 0.1470 & \underline{0.4453} & \textbf{0.7663} & \underline{0.4092} & 0.0801 & \underline{0.4186} & 5.97                 \\
			ConvGRU                 & 0.6816 & 0.4827 & 0.1225 & 0.4289 & 0.7522 & 0.3952 & 0.0657 & 0.4043 & 6.00                 \\
			TrajGRU                 & 0.6809 & \textbf{0.4945} & \textbf{0.1907} & \textbf{0.4553} & 0.7466 & 0.4028 & \textbf{0.1061} & 0.4185 & \underline{5.90}                 \\
			DFN                     & 0.6772 & 0.4719 & 0.1306 & 0.4266 & 0.7489 & 0.3771 & 0.0704 & 0.3988 & 6.03                 \\
			PhyDNet                 & 0.6741 & 0.4709 & \underline{0.1832} & 0.4427 & 0.7402 & 0.4003 & \underline{0.1017} & 0.4141 & 6.25                 \\
			CMS-LSTM                & 0.6835 & 0.4605 & 0.1720 & 0.4387 & 0.7567 & 0.3788 & 0.0948 & 0.4101 & 5.95                 \\ \hline
			\textbf{ST-Observer}     & \underline{0.6880} & \underline{0.4846} & 0.1588 & 0.4438 & \underline{0.7627} & \textbf{0.4122} & 0.0979 & \textbf{0.4243} & \textbf{5.66}                               \\ \hline
		\end{tabular}%
	}
\end{table}

\subsection{Ablation Study}
\subsubsection{Different Configuration of $A$ and $B$}
In Section \ref{function_app}, we mentioned that $A$ and $B$ in the prediction equation \eqref{eq_pre} have multiple parameterization methods. Therefore, we conducted various ablation tests on the TaxiBJ dataset.

First, the sigmoid and clamp functions in Pytorch were used to limit the value range of elements in $A$. At the same time, an experiment with no restrictions on $A$ was conducted as a comparison, abbreviated as 'None' in table \ref{tab:ablation_AB}. Secondly, regarding the initialization of $A$, we used three methods, namely normal, uniform, and Kaiming uniform. As shown in table \ref{tab:ablation_AB}, parameter $A$, constrained with Sigmoind and initialized by Kaiming uniform, performs better and can be used as the default selection.

Regarding parameterization of $B$, we conducted comparative experiments without using $B$, abbreviated as 'None', and three other experiments using 1*1 convolution, 3*3 convolution, and inception module, respectively. The results show that parametering $B$ with Inception performs best.
\begin{table*}[]
	\centering
	\caption{Results of ablation study on $A$ and $B$}
	\label{tab:ablation_AB}
	\resizebox{5.6in}{!}{%
		\begin{tabular}{ccccccccccc}
			\hline
			& \multicolumn{6}{c}{A}                                                                    & \multicolumn{4}{c}{B}                                 \\
			& \multicolumn{3}{c}{Constrains}            & \multicolumn{3}{c}{Initialization}           & \multicolumn{4}{c}{Parameterization}                  \\
			& Sigmoind                  & Clamp & None  & normal & uniform & KM-uniform                & None  & 1*1Conv & 3*3Conv & Inception                 \\ \hline
			MSE  & 0.423                     & 0.432 & 0.428 & 0.421  & 0.415   & 0.406                     & 0.419 & 0.411   & 0.423   & 0.410                     \\
			MAE  & 15.8                      & 15.9  & 15.9  & 15.9   & 15.9    & 15.7                      & 15.8  & 15.8    & 15.8    & 15.8                      \\
			SSIM & 0.983                     & 0.983 & 0.982 & 0.982  & 0.923   & 0.983                     & 0.983 & 0.983   & 0.983   & 0.982                     \\ \hline
			& $\checkmark$ &       &       &        &         & $\checkmark$ &       &         &         & $\checkmark$ \\ \hline
		\end{tabular}%
	}
\end{table*}
\subsubsection{Effect of Learning with Dynamic Regularization}
The dynamic regularization in the loss \eqref{loss_all} is inferred based on the observer theory. How it would make influence the model's performance need to be investigated. In this section, we explore and discuss the ablation studies on the model's performance with different configurations of dynamic regularization on TaxiBJ and CIKM datasets.

Table \ref{tab:ab_dyn} lists the results of ablation experiments. We first compared three different weights of MAE term in the loss function, which are 0, 0.1, and 1. When $\lambda_1=1$ the model achieves the best performance on both databases. Therefore, $\lambda_1=1$ is set as the default value in subsequent experiments. Then we set one of $\lambda_2, \lambda_3, \lambda_4$ to 0.1 or 1, and the other values to 0 for experiments. It can be found that any different value in $\lambda_2, \lambda_3, \lambda_4$ will affect the accuracy of the model. Finally, we set $\lambda_2, \lambda_3, \lambda_4$ to the same values and the optimal values of the previous experiments for experiments. The combination of their different coefficients will also have a greater impact on the accuracy of the model. For example, when we use the combination of $\lambda_2=0.1, \lambda_3=1, \lambda_4=1$, the model achieves the best performance on TaxiBJ. To sum up, weights of dynamical regularization have an impact on model accuracy. Appropriate selection of weights of dynamical regularization can effectively improve the performance of the model.
\begin{table}[]
	\centering
	\caption{Results of ablation study on dynamic regularization}
	\label{tab:ab_dyn}
	\resizebox{\columnwidth}{!}{%
		\begin{tabular}{cccccccccc}
			\hline
			\multirow{2}{*}{$\lambda_1$} & \multirow{2}{*}{$\lambda_2$} & \multirow{2}{*}{$\lambda_3$} & \multirow{2}{*}{$\lambda_4$} &  & TaxiBJ &  &  & CIKM &  \\
			&     &     &     & MSE             & MAE              & SSIM            & MAE           & HSS             & CSI             \\ \hline
			0   & 0   & 0   & 0   & 0.4297          & 16.4698          & 0.9819          & 6.10          & 0.4277          & 0.4128          \\
			0.1 & 0   & 0   & 0   & 0.4217          & 15.9855          & 0.9825          & 5.78          & 0.4277          & 0.4125          \\
			1   & 0   & 0   & 0   & 0.4085          & 15.7766          & 0.9826          & 5.65          & 0.4426          & 0.4236          \\ \hline
			1   & 0.1 & 0   & 0   & 0.4072          & 15.7208          & 0.9826          & 5.73          & 0.4332          & 0.4174          \\
			1   & 1   & 0   & 0   & 0.4231          & 15.9680          & 0.9825          & \textbf{5.66} & \textbf{0.4438} & \textbf{0.4243} \\
			1   & 0   & 0.1 & 0   & 0.4132          & 15.7796          & 0.9825          & 5.65          & 0.4367          & 0.4198          \\
			1   & 0   & 1   & 0   & 0.4048          & 15.7097          & 0.9828          & 5.65          & 0.4376          & 0.4201          \\
			1   & 0   & 0   & 0.1 & 0.4131          & 15.7747          & 0.9825          & 5.67          & 0.4367          & 0.4207          \\
			1   & 0   & 0   & 1   & 0.4096          & 15.6938          & 0.9828          & 5.60          & 0.4387          & 0.4204          \\ \hline
			1   & 0.1 & 0.1 & 0.1 & 0.4051          & 15.7653          & 0.9827          & 5.69          & 0.4405          & 0.4224          \\
			1   & 1   & 1   & 1   & 0.4230          & 15.8517          & 0.9824          & 5.69          & 0.4428          & 0.4240          \\
			1   & 0.1 & 1   & 1   & \textbf{0.4055} & \textbf{15.7496} & \textbf{0.9828} & 5.64          & 0.4426          & 0.4237          \\
			1   & 1   & 0   & 1   & 0.4309          & 15.8594          & 0.9826         & 5.71          & 0.4279          & 0.4136          \\ \hline
		\end{tabular}%
	}
\end{table}

\section{Conclusion}
\label{sec: Conclusion}
A spatiotemporal observer is designed for predictive learning of high-dimensional data based on the traditional Kazantzis-Kravaris-Luenberger observer of low-dimensional systems. The proposed spatiotemporal observer has convergence guarantees and upper-bounded generalization errors. It could be integrated with existing neural network modules and serve as a powerful architecture. In this work, we instantiate this framework with CNNs for modeling and forecasting high-dimensional data. Extensive experiments validate the effectiveness of the proposed method. We hope this work could give a new view to designing the architecture of neural networks for modeling and forecasting spatiotemporal data.

\bibliography{SpatioTemporalObserver.bib}
\bibliographystyle{IEEEtran}

\begin{IEEEbiography}[{\includegraphics[width=1in,height=1.25in,clip,keepaspectratio]{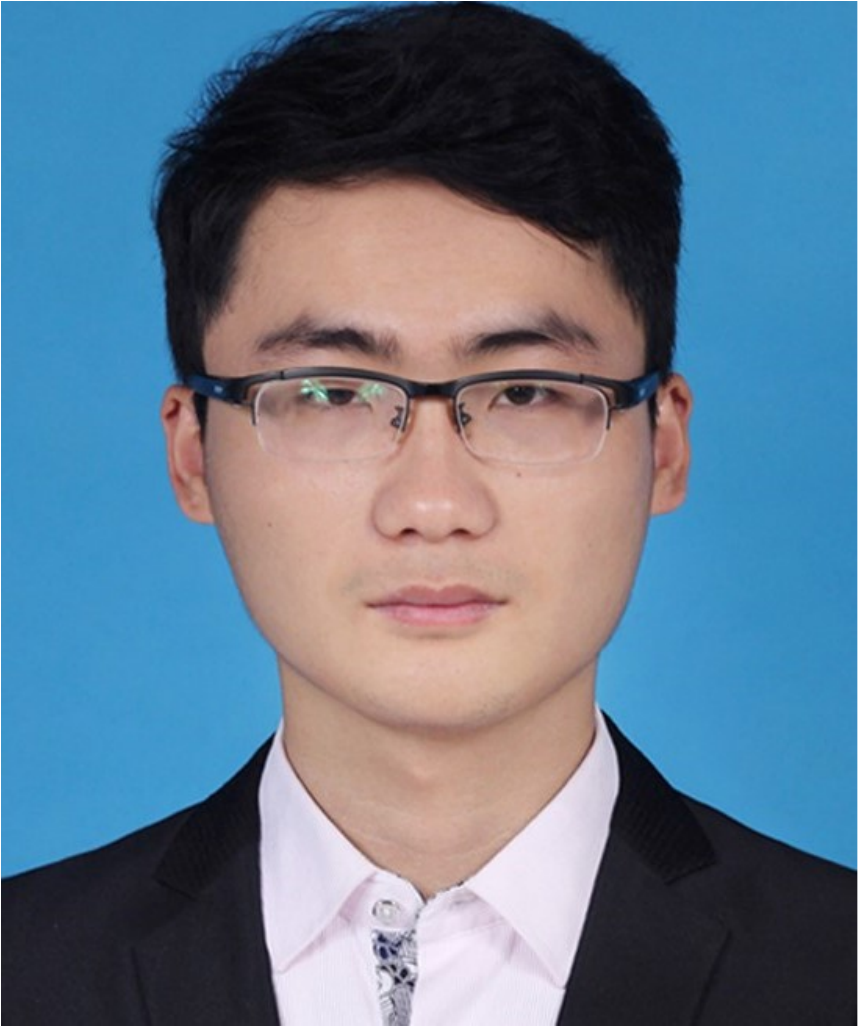}}]{Tongyi Liang}
	received the B.E. degree in automotive engineering from the University of Science and Technology Beijing, Beijing, China, in 2017, the M.E. degree in automotive engineering from the Beihang University, Beijing, China, in 2020. He is currently working toward the Ph.D degree with the Department of Systems Engineering, City University of Hong Kong, Hong Kong, China. 
	
	His current research interests focus on neural networks and deep learning.
\end{IEEEbiography}
\begin{IEEEbiography}[{\includegraphics[width=1in,height=1.25in,clip,keepaspectratio]{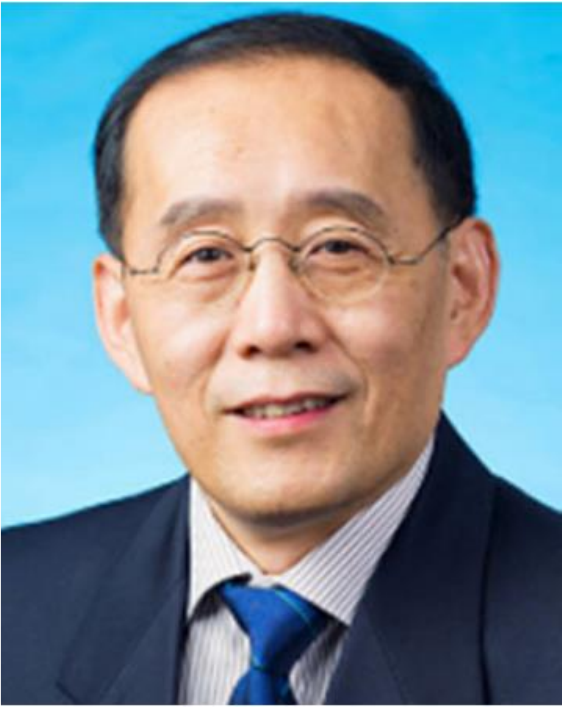}}]{Han-Xiong Li (Fellow, IEEE)}
received the B.E. degree in aerospace engineering from the National University of Defense Technology, Changsha, China, in 1982, the M.E. degree in electrical engineering from the Delft University of Technology, Delft, The Netherlands, in 1991, and the Ph.D. degree in electrical engineering from the University of Auckland, Auckland, New Zealand, in 1997.

He is the Chair Professor with the Department of Systems Engineering, City University of Hong Kong, Hong Kong. He has a broad experience in both academia and industry. He has authored two books and about 20 patents, and authored or coauthored more than 250 SCI journal papers with h-index 52 (web of science). His current research interests include process modeling and control, distributed parameter systems, and system intelligence.

Dr. Li is currently the Associate Editor for IEEE Transactions on SMC: System and was an Associate Editor for IEEE Transactions on Cybernetics (2002-–2016) and  IEEE Transactions on Industrial Electronics (2009–-2015). He was the recipient of the Distinguished Young Scholar (overseas) by the China National Science Foundation in 2004, Chang Jiang Professorship by the Ministry of Education, China in 2006, and National Professorship with China Thousand Talents Program in 2010. Since 2014, he has been rated as a highly cited scholar in China by Elsevier.
\end{IEEEbiography}

\end{document}